\theoremstyle{definition}
\newtheorem{proposition}{Proposition}
\newtheorem{theorem}{Theorem}
\newtheorem{lemma}{Lemma}
\newcommand\V[1]  { \mathbf{#1} }
\newcommand\B[1]  { \boldsymbol{#1} }
\newcommand\set[1] {\mathcal{#1}}
\acrodef{LPC}{linear probabilistic classifier}
\acrodef{RV}{random variable}
\acrodef{ERM}{empirical risk minimization}
\acrodef{RRM}{robust risk minimization}
\acrodef{SVM}{support vector machine}
\acrodef{ANN}{artificial neural network}
\acrodef{RKHS}{reproducing kernel Hilbert space}
\acrodef{MEM}{maximum entropy machine}
\acrodef{DT}{decision tree}
\acrodef{QDA}{quadratic discriminant analysis}
\acrodef{NN}{nearest neighbor} 
\acrodef{RF}{random forest}
\acrodef{LR}{logistic regression}
\acrodef{LUSI}{learning using statistical invariants}
\acrodef{ACSC}{adversarial cost-sensitive classifier}
\title{Supervised classification via minimax\\ probabilistic transformations}
\author{
  Santiago Mazuelas \\
  BCAM-Basque Center of Applied Mathematics\\
  Bilbao, Spain\\
  \texttt{smazuelas@bcamath.org} \\
  \And
  Andrea Zanoni\\
  \'{E}cole Polytechnique F\'{e}d\'{e}rale de Lausanne\\  
  Lausanne, Switzerland\\
  \texttt{andrea.zanoni@epfl.ch }
   \And
 Aritz P\'{e}rez\\
  BCAM-Basque Center of Applied Mathematics\\
  Bilbao, Spain\\
  \texttt{aperez@bcamath.org} \\
}
\date{}
\begin{document}

\maketitle
\begin{abstract}
Conventional techniques for supervised classification constrain the classification rules considered and use surrogate losses for classification 0-1 loss. Favored families of classification rules are those that enjoy parametric representations suitable for surrogate loss minimization, and low complexity properties suitable for overfitting control. This paper presents classification techniques based on robust risk minimization (RRM) that we call linear probabilistic classifiers (LPCs). The proposed techniques consider unconstrained classification rules, optimize the classification 0-1 loss, and provide performance bounds during learning. LPCs enable efficient learning by using linear optimization, and avoid overffiting by using RRM over polyhedral uncertainty sets of distributions. We also provide finite-sample generalization bounds for LPCs and show their competitive performance with state-of-the-art techniques using benchmark datasets.
\end{abstract}


\section{Introduction}

Supervised classification uses training data to find a classification rule with small risk (out-of-sample error). Risk minimization cannot be addressed in practice since the probability distribution of pairs features-label is unknown. Therefore, learning techniques for supervised classification obtain classification rules by addressing a surrogate for risk minimization. The most common surrogate is \ac{ERM} that is based on 
minimizing the loss achieved with training examples. Such approach may suffer from overfitting, usually addressed by constraining the classification rules considered to have reduced complexity \cite{Vap:98,EvgPonPog:00}. Other surrogate for 
risk minimization is \ac{RRM} that is based on minimizing the worst-case risk against a set of probability distributions consistent with 
training data. Such approach avoids overfitting as long as the probability distribution of features-label pairs 
belongs to the uncertainty set considered, but it requires to solve a minimax optimization problem \cite{LanGhaBhaJor:02,DelYe:10,FarTse:16,AsiXinBeh:15,AbaMohKuh:15,ShaKuhMoh:17,DucGlyNam:16,NamDuc:17,LeeRag:18}.

Conventional learning techniques for supervised classification constrain the classification rules considered and use surrogate losses. Favored families of classification rules are those that enjoy parametric representations suitable for surrogate loss minimization, and low complexity properties suitable for overfitting control. Techniques based on regularization in \acp{RKHS} \cite{Vap:98}, such as \acp{SVM} and kernel logistic regression, consider classification rules obtained from functions with reduced norm in an \ac{RKHS}, with different design choices such as kernel and regularization parameters. Techniques based on \acp{ANN} \cite{GooBenCou:16} consider classification rules with a hierarchical structure, with different design choices such as network architecture and activation functions. Techniques based on ensemble learning \cite{SchFre:12}, such as Adaboost and \acp{RF}, consider classification rules obtained by combinations of weak rules, with different design choices such as type of weak rules and aggregation method. In addition, conventional techniques enable tractable optimization of the classification rule's parameters by using a surrogate loss (e.g., hinge, logistic, cross-entropy, and exponential) instead of the original target given by classification 0-1 loss.


\textbf{Main contributions}

This paper presents techniques for supervised classification based on \ac{RRM} that we call \acp{LPC}. The proposed techniques consider unconstrained classification rules, optimize the classification 0-1 loss, and provide performance bounds during learning. 
Current techniques based on \ac{RRM} utilize uncertainty sets of distributions similar to the empirical distribution in terms of several metrics such as moments and marginals fits \cite{LanGhaBhaJor:02,DelYe:10,FarTse:16,AsiXinBeh:15}, Wasserstein distances \cite{AbaMohKuh:15,ShaKuhMoh:17}, and f-divergences \cite{DucGlyNam:16,NamDuc:17}. The proposed \acp{LPC} utilize uncertainty sets of distributions given by constraining the expectations of a chosen function that we call generating function. Such distributions are similar in terms of the probability metric given by the generating function \cite{Mul:97}. Most \ac{RRM} methods enable efficient minimax optimization by using parametric families of classification rules and surrogate losses. Techniques based on Wasserstein distances use linear functions or \acp{RKHS} and surrogate log lossess \cite{AbaMohKuh:15,ShaKuhMoh:17}, while techniques based on f-divergences can use more general parametric families of classification rules and surrogate losses as long as they result in convex losses \cite{DucGlyNam:16,NamDuc:17}. As the proposed \acp{LPC}, techniques in \cite{AsiXinBeh:15} consider unconstrained classification rules exploiting Lagrange duality. Such work uses uncertainty sets defined by equality constraints, and its learning stage is enabled by approximate optimization with a stochastic gradient descent algorithm. On the other hand, \acp{LPC} consider uncertainty sets that contain the actual distribution with a tunable confidence, and \acp{LPC} learning is enabled by the reformulation of minimax problem as a linear program.


More detailed comparisons with related techniques are provided in the remarks to the paper's main new results, organized as follows:
\begin{itemize}
\item Learning techniques that determine \acp{LPC} as the solution of a linear optimization problem (Theorem~\ref{th1} in Section~\ref{sec-2}). 
\item Techniques that obtain upper and lower bounds for the expected loss of general classification rules (Theorem~\ref{th1} and Proposition~\ref{prop} in Section~\ref{sec-2}). 
\item Finite-sample generalization bounds for the risk of \acp{LPC} in terms of training size and parameters describing the complexity of the generating function (Theorem~\ref{th-bounds} in Section~\ref{sec-3}).
\end{itemize}
In addition, Section~\ref{sec-4} describes efficient implementations for \acp{LPC} and proposes a simple generating function, and Section~\ref{sec-5} shows the suitability of the presented performance' bounds and compares the classification error of \acp{LPC} with respect to state-of-the-art techniques.

\emph{Notation:} calligraphic upper case letters denote sets; real-valued functions and vector-valued functions are denoted by lower and upper case letters, respectively; vectors and matrices are denoted by bold lower and upper case letters, respectively; $\V{v}^{\text{T}}$, $\V{v}^+$, and $\|\V{v}\|_{q,r}$ denote the transpose, positive part, and $(q,r)$-mixed norm of vector $\V{v}$,\footnote{The $(q,r)$-mixed norm of a vector $\V{v}\in\mathbb{R}^{I\cdot J}$ indexed by $\{1,2,\ldots,I\}\times\{1,2,\ldots,J\}$ is $\|\V{v}\|_{q,r}=\|\left[\|\V{v}_1\|_q,\|\V{v}_2\|_q,\ldots,\|\V{v}_I\|_q\right]^{\text{T}}\|_r$ where $\V{v}_i=[v_{(i,1)},v_{(i,2)},\ldots,v_{(i,J)}]^{\text{T}}\in\mathbb{R}^J$ for $i=1,2,\ldots,I$. For instance, $\|\V{v}\|_{1,\infty}=\max_{i\in\set{I}}\sum_{j\in\set{J}}|\V{v}_{(i,j)}|$.} respectively; $\mathbb{E}_p\{\cdot\}$ denotes expectation with respect to probability distribution $p$; $\preceq$ and $\succeq$ denote vector (component-wise) inequalities; $\V{1}$ denotes a vector with all components equal to $1$; and $|\set{Z}|$ denotes de cardinality of set $\set{Z}$. We represent real-valued and vector-valued functions with finite domains by vectors and matrices, respectively; specifically, we represent a function $f:\set{Z}\to\mathbb{R}$ for finite set $\set{Z}=\{z_1,z_2,\ldots,z_k\}$ with vector $\V{f}=[f(z_1),f(z_2),\ldots,f(z_k)]^\text{T}\in\mathbb{R}^k$, and a vector function $F:\set{Z}\to\mathbb{R}^m$ by matrix $\V{F}\in\mathbb{R}^{m\times k}$ with column $i$ given by $F(z_i)$ for $i=1,2,\ldots,k$. In addition, if $F$ is a function with domain $\set{X}\times\set{Y}$, for each $x\in\set{X}$ we represent by $F_x$ the function $F_x(y)=F(x,y)$ with domain $\set{Y}$. Finally, we denote by $\Delta(\set{Z})$ the set of probability distributions with support $\set{Z}$ and represent each $p\in\Delta(\set{Z})$ for finite set $\set{Z}$ by its probability mass function $p:\set{Z}\to\mathbb{R}$ with $\V{p}\succeq \V{0}$ and $\V{p}^{\text{T}}\V{1}=1$.
\section{Minimax classification over polyhedral uncertainty sets}\label{sec-2}
This section first briefly describes the problem statement for supervised classification, and then presents techniques to learn \acp{LPC} and to bound expected losses.
In what follows, features and labels are elements of sets $\set{X}$ and $\set{Y}$, respectively. We assume that both sets are finite; commonly the cardinality of $\set{X}$ is very large while that of $\set{Y}$ is very small. Such finiteness assumption does not lose any generality in practice, at least using digital computers.

A deterministic classification rule is a function from $\set{X}$ to $\set{Y}$. In this paper we consider also classification rules that are allowed to randomly classify each feature, so that a general classification rule is given by a probabilistic transformation also known as Markov transition or channel \cite{RooWill:18}. We denote by $\Delta(\set{X},\set{Y})$ the set of probabilistic transformations from $\set{X}$ to $\set{Y}$, that is, functions from $\set{X}$ to $\Delta(\set{Y})$. In what follows we represent each $h\in\Delta(\set{X},\set{Y})$ by a Markov kernel function $h: \set{X}\times\set{Y}\to\mathbb{R}$ with $h_x$ a probability mass function in $\set{Y}$ for any $x\in\set{X}$ (i.e., $\V{h}_x\succeq \V{0}$ and $\V{h}_x^{\text{T}}\V{1}=1$). A classification rule $h\in\Delta(\set{X},\set{Y})$ classifies each feature $x\in\set{X}$ as label $y\in\set{Y}$ with probability $h(x,y)$. In particular, deterministic classification rules correspond to $h\in\Delta(\set{X},\set{Y})$ that takes only values $0$ and $1$.



The classification $0$-$1$ loss (called just loss in the following) of a classification rule at $(x,y)\in\set{X}\times\set{Y}$ is $0$ if it classifies $x$ with $y$, and is $1$ otherwise. Hence, the expected loss of classification rule $h\in\Delta(\set{X},\set{Y})$ with respect to a probability distribution $p\in\Delta(\set{X}\times\set{Y})$ is $$\ell(h,p)=1-\V{p}^{\text{T}}\V{h}.$$
 The risk of a classification rule $h$ (denoted $R(h)$) is its expected loss with respect to the actual distribution of features-label pairs $p^*$, that is
 $$R(h)=\ell(h,p^*).$$ The minimum risk is known as Bayes risk and becomes $R_{\text{Bayes}}= 1-\|\V{p}^*\|_{\infty,1}$ since it is achieved by Bayes' rule $h_{\text{Bayes}}$ that classifies each $x\in\set{X}$ with a label attaining the maximum of $p_x^*$.


The goal of supervised classification is to determine a classification rule with reduced risk by using a set of training samples. \ac{ERM} approach is based on minimizing the empirical risk $\ell(h,p_n)$, where $p_n$ is the empirical distribution of training samples \cite{Vap:98,EvgPonPog:00}. \ac{RRM} approach is based on minimizing the maximum (worst-case) risk $\ell(h,p)$ for $p$ a probability distribution in an uncertainty set obtained from training samples \cite{LanGhaBhaJor:02,DelYe:10,FarTse:16,AsiXinBeh:15,AbaMohKuh:15,ShaKuhMoh:17,DucGlyNam:16,NamDuc:17,LeeRag:18}. The following shows how uncertainty sets defined by linear inequalities enable efficient \ac{RRM} without constraining the set of classification rules. 

Given vectors $\V{a},\V{b}\in\mathbb{R}^m$ with $\V{a}\preceq\V{b}$ and a vector function $\Phi:\set{X}\times\set{Y}\to\mathbb{R}^m$, we denote by $\set{U}_\Phi^{\V{a},\V{b}}$ the set
$$\set{U}_\Phi^{\V{a},\V{b}}=\{p\in\Delta(\set{X}\times\set{Y}):\  \V{a}\preceq\mathbb{E}_p\{\Phi(x,y)\}\preceq \V{b}\}.$$
In addition, we call function $\Phi$ the generating function, and vectors $\V{a}$ and $\V{b}$ the lower and upper endpoints of  expectation interval estimates. The minimax expected loss against uncertainty set $\set{U}_\Phi^{\V{a},\V{b}}$ is
\begin{align}\label{minimax-risk}R_\Phi^{\V{a},\V{b}}=\min_{h\in\Delta(\set{X},\set{Y})}\max_{p\in\set{U}_\Phi^{\V{a},\V{b}}}\ell(h,p)=1-\min_{p\in\set{U}_\Phi^{\V{a},\V{b}}}\|p\|_{\infty,1}\end{align}
where the second equality is obtained since the minimax coincides with the maximin because $\Delta(\set{X},\set{Y})$ and $\set{U}_\Phi^{\V{a},\V{b}}$ are closed convex sets of $\mathbb{R}^{|\set{X}||\set{Y}|}$ \cite{GruDaw:04}.
In the following, whenever we use an expectation point estimate, i.e., $\V{a}=\V{b}$, we drop $\V{b}$ from the superscripts, for instance we denote $\set{U}_\Phi^{\V{a},\V{b}}$ for $\V{a}=\V{b}$ as $\set{U}_\Phi^{\V{a}}$.

Uncertainty sets $\set{U}_\Phi^{\V{a},\V{b}}$ are polyhedra in $\Delta(\set{X}\times\set{Y})\subset\mathbb{R}^{|\set{X}||\set{Y}|}$ defined by affine inequality constraints since $\mathbb{E}_p\{\Phi(x,y)\}=\B{\Phi}\V{p}$. They contain probability distributions that are similar in terms of the generating function's expectations, for instance, two distributions are in the same uncertainty set $\set{U}_\Phi^{\V{a}}$ for some $\V{a}\in\mathbb{R}^m$ if their distance is zero for the semi-metric generated by $\Phi$ \cite{Mul:97}.

The following result determines minimax classification rules against the above uncertainty sets as well as the corresponding minimax expected loss.   
\begin{theorem}\label{th1}
Let $\Phi:\set{X}\times\set{Y}\to\mathbb{R}^m$ and $\V{a}, \V{b}\in\mathbb{R}^{m}$ with $\V{a}\preceq\V{b}$. If a classification rule
$h^*\in\Delta(X,Y)$ satisfies 
\begin{align}\label{robust-act}\V{h}^*\succeq \B{\Phi}^{\text{T}}(\B{\alpha}^*-\B{\beta}^*)+\V{1}\gamma^*\end{align}
for $\B{\alpha}^*,\B{\beta}^*$, $\gamma^*$ solution of optimization problem
\begin{align}\label{learning-ineq}\begin{array}{cc}\underset{\gamma\in\mathbb{R},\B{\alpha},\B{\beta}\in\mathbb{R}^{m}}{\max}&\V{a}^{\text{T}}\B{\alpha}-\V{b}^{\text{T}}\B{\beta}+\gamma\\
\mbox{s. t.}&\|(\B{\Phi}^{\text{T}}(\B{\alpha}-\B{\beta})+\V{1}\gamma)^+\|_{1,\infty}\leq 1\\
&\B{\alpha},\B{\beta}\succeq \V{0} \end{array}\end{align}
then, $h^*$ is a minimax classification rule againts uncertainty set $\set{U}_{\Phi}^{\V{a},\V{b}}$, that is,
$$h^*\in\arg\min_{h\in\Delta(X,Y)}\max_{p\in\set{U}_{\Phi}^{\V{a},\V{b}}}\ell(h,p).$$
In addition, the minimax expected loss against uncertainty set $\set{U}_\Phi^{\V{a},\V{b}}$ is given by \begin{align}\label{upper}R_\Phi^{\V{a},\V{b}}=1-\V{a}^{\text{T}}\B{\alpha}^*+\V{b}^{\text{T}}\B{\beta}^*-\gamma^*.\end{align}
\end{theorem}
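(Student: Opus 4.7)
The plan is to derive the LP dual of the inner problem $\min_{p\in\set{U}_\Phi^{\V{a},\V{b}}}\|p\|_{\infty,1}$ appearing in the maximin characterization (\ref{minimax-risk}), and then to recognize (\ref{learning-ineq}) as exactly that dual. Once that is in hand, strong LP duality yields the value formula (\ref{upper}), and complementary-slackness-style manipulations yield the minimax rule from the pointwise lower bound (\ref{robust-act}).

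\paragraph{Step 1: LP reformulation.} Introduce epigraph variables $t_x\geq p(x,y)$ for every $(x,y)$. Then $\min_{p\in\set{U}_\Phi^{\V{a},\V{b}}}\|p\|_{\infty,1}$ becomes the linear program
\begin{align*}
\min_{\V{p},\V{t}}\ \V{1}^{\text{T}}\V{t}\quad\text{s.t.}\quad t_x-p(x,y)\geq 0\ \forall x,y;\ \B{\Phi}\V{p}\geq \V{a};\ -\B{\Phi}\V{p}\geq -\V{b};\ \V{1}^{\text{T}}\V{p}=1;\ \V{p}\succeq 0.
\end{align*}
Assign dual multipliers $\mu(x,y)\geq 0$, $\B{\alpha}\succeq 0$, $\B{\beta}\succeq 0$, and free $\gamma\in\mathbb{R}$ to these four groups of constraints. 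Writing the dual: the coefficient of each free $t_x$ must equal $1$, giving $\sum_y\mu(x,y)=1$ for every $x$; the coefficient of each $p(x,y)\geq 0$ must be nonpositive, giving $\mu(x,y)\geq \Phi(x,y)^{\text{T}}(\B{\alpha}-\B{\beta})+\gamma$. Eliminating $\mu$ by choosing the pointwise minimum, $\mu(x,y)=\max\{0,\Phi(x,y)^{\text{T}}(\B{\alpha}-\B{\beta})+\gamma\}$, the constraint $\sum_y\mu(x,y)\leq 1$ for every $x$ is exactly $\|(\B{\Phi}^{\text{T}}(\B{\alpha}-\B{\beta})+\V{1}\gamma)^+\|_{1,\infty}\leq 1$. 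The dual objective $\V{a}^{\text{T}}\B{\alpha}-\V{b}^{\text{T}}\B{\beta}+\gamma$ then matches (\ref{learning-ineq}) verbatim. Strong LP duality, applicable since the primal is feasible (any $p\in\set{U}_\Phi^{\V{a},\V{b}}$ with $t_x=\max_y p(x,y)$) and bounded below by $0$, immediately yields (\ref{upper}) via (\ref{minimax-risk}).

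\paragraph{Step 2: From the dual bound to the minimax classifier.} For any $h^*$ satisfying (\ref{robust-act}) and any $p\in\set{U}_\Phi^{\V{a},\V{b}}$, since $\V{p}\succeq\V{0}$, taking the inner product with $\V{p}$ preserves the inequality:
\begin{align*}
\V{p}^{\text{T}}\V{h}^*\ \geq\ (\B{\Phi}\V{p})^{\text{T}}(\B{\alpha}^*-\B{\beta}^*)+(\V{1}^{\text{T}}\V{p})\,\gamma^*\ =\ (\B{\Phi}\V{p})^{\text{T}}\B{\alpha}^*-(\B{\Phi}\V{p})^{\text{T}}\B{\beta}^*+\gamma^*.
\end{align*}
Using $\B{\alpha}^*,\B{\beta}^*\succeq\V{0}$ together with $\V{a}\preceq\B{\Phi}\V{p}\preceq\V{b}$ gives $(\B{\Phi}\V{p})^{\text{T}}\B{\alpha}^*\geq \V{a}^{\text{T}}\B{\alpha}^*$ and $-(\B{\Phi}\V{p})^{\text{T}}\B{\beta}^*\geq -\V{b}^{\text{T}}\B{\beta}^*$. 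Hence $\ell(h^*,p)=1-\V{p}^{\text{T}}\V{h}^*\leq 1-\V{a}^{\text{T}}\B{\alpha}^*+\V{b}^{\text{T}}\B{\beta}^*-\gamma^*=R_\Phi^{\V{a},\V{b}}$, and taking the max over $p$ shows $h^*$ attains the minimax.

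\paragraph{Remark on obstacles.} The main technical point is verifying that a probability kernel $h^*$ meeting (\ref{robust-act}) actually exists. This follows because the feasibility of $(\B{\alpha}^*,\B{\beta}^*,\gamma^*)$ in (\ref{learning-ineq}) gives $\sum_y\max\{0,\Phi(x,y)^{\text{T}}(\B{\alpha}^*-\B{\beta}^*)+\gamma^*\}\leq 1$ for every $x$, so the pointwise positive parts of the right-hand side of (\ref{robust-act}) can always be completed to a valid row-stochastic $\V{h}^*_x$; one brief paragraph suffices to spell this out. Beyond that, the proof is essentially LP duality plus a clean algebraic bound, and the rest is bookkeeping of norms and transposes to match the paper's notation.
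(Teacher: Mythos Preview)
Your proof is correct and arrives at the same conclusion, but it takes a noticeably more elementary route than the paper.

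The paper establishes the value formula (\ref{upper}) by writing $\min_{p\in\set{U}_\Phi^{\V{a},\V{b}}}\|\V{p}\|_{\infty,1}$ as a constrained minimization of $f(\V{p})=\|\V{p}\|_{\infty,1}+I^+(\V{p})$ and then invoking Fenchel--Lagrange duality, which requires computing the conjugate $f^*$ via an auxiliary lemma on dual norms. You bypass this entirely by introducing epigraph variables $t_x$ and taking the ordinary LP dual; the elimination of $\mu$ (noting that $\sum_y\mu(x,y)=1$ is feasible iff $\sum_y(\cdot)^+\leq 1$, since $\mu$ does not appear in the objective) is a clean shortcut that recovers the $\|(\cdot)^+\|_{1,\infty}\leq 1$ constraint without any conjugate-function machinery.

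For the optimality of $h^*$, the paper builds an auxiliary relaxed set $\widetilde{\set{U}}=\{\V{p}\succeq\V{0},\|\V{p}\|_{1,\infty}\leq 1\}$ and a surrogate loss $\widetilde{\ell}$, shows by separability in $x$ that any $h^*$ satisfying (\ref{robust-act}) solves $\min_h\max_{p\in\widetilde{\set{U}}}\widetilde{\ell}(h,p)$, and then sandwiches this value between itself and $R_\Phi^{\V{a},\V{b}}$ using a second application of the minimax theorem. Your Step~2 replaces all of this with a single weak-duality inequality: taking $\V{p}^{\text{T}}$ of (\ref{robust-act}) and using $\V{a}\preceq\B{\Phi}\V{p}\preceq\V{b}$, $\B{\alpha}^*,\B{\beta}^*\succeq\V{0}$ directly gives $\ell(h^*,p)\leq R_\Phi^{\V{a},\V{b}}$ for every $p\in\set{U}_\Phi^{\V{a},\V{b}}$. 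This is both shorter and conceptually sharper. What the paper's detour buys is reusability---the conjugate-function lemma is invoked again in the proof of Proposition~\ref{prop}---and the auxiliary decomposition makes the per-$x$ structure explicit, but for Theorem~\ref{th1} alone your argument is the more economical one.
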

\vspace{-0.3cm}
\begin{proof}
See Appendix~\ref{proof-th1}.
\end{proof}
\vspace{-0.2cm}
Classification rules satisfying \eqref{robust-act} always exist since for any $x\in\set{X}$, $\|\left(\B{\Phi}_x^{\text{T}}(\B{\alpha}^*-\B{\beta}^*)+\V{1}\gamma^*\right)^+\|_1\leq 1$ due to the constraints in \eqref{learning-ineq}. In addition, a classification rule satisfying \eqref{robust-act} can be directly obtained from a solution of \eqref{learning-ineq} $\B{\alpha}^*,\B{\beta}^*,\gamma^*$ as 
\begin{align}\label{t-a,b}
h^{\V{a},\V{b}}(x,y)=(\Phi(x,y)^{\text{T}}(\B{\alpha}^*-\B{\beta}^*)+\gamma^*)^++\frac{1-\|(\B{\Phi}_x^{\text{T}}(\B{\alpha}^*-\B{\beta}^*)+\V{1}\gamma^*)^+\|_1}{|\set{Y}|}
\end{align} 
for each $(x,y)\in\set{X}\times\set{Y}$. In what follows, we refer to such classification rules as \acp{LPC} for generating function $\Phi$, that is, classification rules $h^{\V{a},\V{b}}$ for $\V{a}, \V{b}\in\mathbb{R}^{m}$ and $\V{a}\preceq\V{b}$ given by \eqref{t-a,b} for $\B{\alpha}^*,\B{\beta}^*,\gamma^*$ solution of \eqref{learning-ineq}. 

The learning process of an \ac{LPC} consists on solving the convex optimization problem \eqref{learning-ineq}. The inputs of such learning process are expectation interval estimates given by $\V{a}\prec\V{b}$ or expectation point estimates given by $\V{a}=\V{b}$. Such estimates can be obtained by averaging the values that the generating function $\Phi$ takes over the training samples. 
Then, the prediction process with an \ac{LPC} for a specific $x\in\set{X}$ consists on randomly sample a label $y$ with probability given by \eqref{t-a,b} using $\B{\alpha}^*,\B{\beta}^*,\gamma^*$ obtained during learning.


Optimization problem \eqref{learning-ineq} is equivalent to a linear optimization problem with at most $|\set{X}|(2^{|\set{Y}|}-1)+2m$ constraints. Specifically,
$$\|(\B{\Phi}^{\text{T}}(\B{\alpha}-\B{\beta})+\V{1}\gamma)^+\|_{1,\infty}\leq 1\Leftrightarrow (\sum_{y\in\set{S}}\Phi(x,y)^{\text{T}})(\B{\alpha}-\B{\beta})+|\set{S}| \gamma \leq 1,\  \forall x\in\set{X}, \set{S}\subseteq\set{Y},\set{S}\neq\emptyset$$
because 
$$\|(\B{\Phi}^{\text{T}}(\B{\alpha}-\B{\beta})+\V{1}\gamma)^+\|_{1,\infty}\leq 1\Leftrightarrow \|\left(\B{\Phi}_{x}^{\text{T}}(\B{\alpha}-\B{\beta})+\V{1}\gamma\right)^+\|_1\leq 1,\  \forall x\in\set{X}$$
and 
$$\|\left(\B{\Phi}_{x}^{\text{T}}(\B{\alpha}-\B{\beta})+\V{1}\gamma\right)^+\|_1=\max_{\set{S}\subseteq \set{Y}}(\sum_{y\in\set{S}}\Phi(x,y)^{\text{T}})(\B{\alpha}-\B{\beta})+\sum_{y\in\set{S}}\gamma.$$

If case of using expectation point estimates, i.e., $\V{a}=\V{b}$, we can take the variables in \eqref{learning-ineq} to be $\gamma$ and $\B{\lambda}=\B{\alpha}-\B{\beta}\in\mathbb{R}^m$. In that case, \eqref{learning-ineq} is equivalent to
\begin{align}\label{learning-eq}\begin{array}{cc}\underset{\gamma\in\mathbb{R},\B{\lambda}\in\mathbb{R}^{m}}{\max}&\V{a}^{\text{T}}\B{\lambda}+\gamma\\
\mbox{s. t.}&\|(\B{\Phi}^{\text{T}}\B{\lambda}+\V{1}\gamma)^+\|_{1,\infty}\leq 1\end{array}\end{align}
that is an optimization problem with $m$ less dimensions and $2m$ less constraints than \eqref{learning-ineq}.

The learning process in \cite{AsiXinBeh:15} determines approximately minimax classification rules by addressing optimization such as that in \eqref{minimax-risk} for case $\V{a}=\V{b}$ using a stochastic gradient descent algorithm. Such approach is enabled by using the training samples' empirical distribution as surrogate for the features' marginal of distributions in the uncertainty set. The proposed learning process for \acp{LPC} described in Theorem~\ref{th1} does not rely on approximations and finds minimax classification rules by using linear optimization.




The following result shows that the usage of polyhedral uncertainty sets also allows to obtain performance guarantees (bounds for expected losses) by solving two linear optimization problems.

\begin{proposition}\label{prop}
Let  
\begin{align}\label{lower}\begin{array}{cccc}\kappa_\Phi^{\V{a},\V{b}}(q)&=\underset{\gamma\in\mathbb{R},\B{\alpha},\B{\beta}\in\mathbb{R}^{m}}{\max}&\V{a}^{\text{T}}\B{\alpha}-\V{b}^{\text{T}}\B{\beta}+\gamma\\&
\mbox{s. t.}&\B{\Phi}^{\text{T}}(\B{\alpha}-\B{\beta})+\V{1}\gamma\preceq \V{q}\\&
&\B{\alpha},\B{\beta}\succeq \V{0} \end{array}\end{align}
for a function $q:\set{X}\times\set{Y}\to\mathbb{R}$.
Then, for any $p\in\set{U}_{\Phi}^{\V{a},\V{b}}$ and $h\in\Delta(\set{X},\set{Y})$ \begin{align}\label{bounds}0\leq1+\kappa_\Phi^{\V{a},\V{b}}(-h)\leq\ell(h,p)\leq1-\kappa_\Phi^{\V{a},\V{b}}(h)\leq 1.\end{align}
In addition, $\ell(h,p)=1+\kappa_\Phi^{\V{a},\V{b}}(-h)$ (resp. $\ell(h,p)=1-\kappa_\Phi^{\V{a},\V{b}}(h)$) if $p$ minimizes (resp. maximizes) the expected loss of $h$ over distributions in $\set{U}_\Phi^{\V{a},\V{b}}$. 
\end{proposition}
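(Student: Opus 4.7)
The plan is to recognize $\kappa_\Phi^{\V{a},\V{b}}(q)$ as the optimal value of the linear programming dual of the problem of minimizing $\mathbb{E}_p\{q(x,y)\}$ over $p\in\set{U}_\Phi^{\V{a},\V{b}}$, and then to specialize the resulting inequality $\kappa_\Phi^{\V{a},\V{b}}(q)\le\mathbb{E}_p\{q\}$ to $q=\V{h}$ and $q=-\V{h}$. Since $\ell(h,p)=1-\V{p}^{\text{T}}\V{h}$, substituting $q=\V{h}$ gives $\ell(h,p)\le 1-\kappa_\Phi^{\V{a},\V{b}}(\V{h})$ and substituting $q=-\V{h}$ gives $\ell(h,p)\ge 1+\kappa_\Phi^{\V{a},\V{b}}(-\V{h})$, which are the middle inequalities in \eqref{bounds}.

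To set up the duality, I would write the primal problem $\min_{p}\V{q}^{\text{T}}\V{p}$ with constraints $\V{p}\succeq\V{0}$, $\V{p}^{\text{T}}\V{1}=1$, $\V{a}\preceq\B{\Phi}\V{p}\preceq\V{b}$, assign multipliers $\B{\alpha}\succeq\V{0}$ to $\B{\Phi}\V{p}\succeq\V{a}$, $\B{\beta}\succeq\V{0}$ to $\B{\Phi}\V{p}\preceq\V{b}$, and $\gamma\in\mathbb{R}$ to $\V{p}^{\text{T}}\V{1}=1$, then minimize the Lagrangian over $\V{p}\succeq\V{0}$. Finiteness of the inner minimum forces the dual constraint $\B{\Phi}^{\text{T}}(\B{\alpha}-\B{\beta})+\V{1}\gamma\preceq\V{q}$, and the resulting dual objective $\V{a}^{\text{T}}\B{\alpha}-\V{b}^{\text{T}}\B{\beta}+\gamma$ matches \eqref{lower} exactly. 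Weak LP duality then yields $\kappa_\Phi^{\V{a},\V{b}}(q)\le\V{p}^{\text{T}}\V{q}=\mathbb{E}_p\{q\}$ for every $p\in\set{U}_\Phi^{\V{a},\V{b}}$, and strong LP duality, applicable because the primal is feasible (by hypothesis $p\in\set{U}_\Phi^{\V{a},\V{b}}$) and the dual is clearly bounded (take $p$ as a feasible certificate), gives equality at a primal minimizer, which is precisely the tightness statement in the proposition.

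The two outermost inequalities follow from elementary bounds on the loss. Because $h$ is a Markov kernel, $0\leq h(x,y)\leq 1$ for all $(x,y)$, so $\mathbb{E}_p\{h\}\in[0,1]$ for every $p\in\Delta(\set{X}\times\set{Y})$. Taking the minimum over $\set{U}_\Phi^{\V{a},\V{b}}$ yields $\kappa_\Phi^{\V{a},\V{b}}(\V{h})\ge 0$ and $\kappa_\Phi^{\V{a},\V{b}}(-\V{h})\ge -1$, establishing $1-\kappa_\Phi^{\V{a},\V{b}}(\V{h})\le 1$ and $0\le 1+\kappa_\Phi^{\V{a},\V{b}}(-\V{h})$ respectively.

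The only real subtlety is the duality argument: one must check that the primal LP is feasible (guaranteed by the existence of $p\in\set{U}_\Phi^{\V{a},\V{b}}$) and bounded (clear since $\V{q}^{\text{T}}\V{p}\in[-1,1]$ for $q=\pm\V{h}$), so that strong duality applies and the dual value $\kappa_\Phi^{\V{a},\V{b}}(q)$ equals the primal minimum. Once this is in place, all four inequalities and the equality-at-extremizer claim follow by direct substitution, so I do not expect any further obstacle.
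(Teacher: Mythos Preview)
Your proposal is correct and follows essentially the same route as the paper: both identify $\kappa_\Phi^{\V{a},\V{b}}(q)$ with $\min_{p\in\set{U}_\Phi^{\V{a},\V{b}}}\V{p}^{\text{T}}\V{q}$ via Lagrange/LP duality and then specialize to $q=\pm h$. The only cosmetic difference is that the paper packages the inner minimization over $\V{p}\succeq\V{0}$ through the conjugate of $f(\V{p})=\V{p}^{\text{T}}\V{q}+I^+(\V{p})$ (its Lemma~2), whereas you compute the Lagrangian directly; you also spell out the outermost inequalities $0\le\cdot$ and $\cdot\le 1$, which the paper leaves implicit.
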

\vspace{-0.3cm}
\begin{proof}
See Appendix~\ref{proof-prop}.
\end{proof}
\vspace{-0.3cm}
For an \ac{LPC} $h^{\V{a},\V{b}}$, the upper bound above is directly given by the learning phase, that is, $R_\Phi^{\V{a},\V{b}}$ given in \eqref{upper} equals $1-\kappa_\Phi^{\V{a},\V{b}}(h^{\V{a},\V{b}})$. On the other hand, the lower bound for $h^{\V{a},\V{b}}$ denoted as $L_\Phi^{\V{a},\V{b}}= 1+\kappa_\Phi^{\V{a},\V{b}}(-h^{\V{a},\V{b}})$ requires to solve an additional linear optimization problem.

Techniques based on f-divergences and Wasserstein distances in \cite{DucGlyNam:16,AbaMohKuh:15,ShaKuhMoh:17}
obtain analogous upper and lower bounds for the corresponding uncertainty sets. 
Note that the bounds for expected losses become risk's bounds if the actual distribution of features-label pairs belongs to the uncertainty set. Such case can be attained with a tunable confidence using uncertainty sets defined by Wasserstein distances as in \cite{AbaMohKuh:15,ShaKuhMoh:17} or using the proposed \acp{LPC} with expectation confidence intervals. However, the bounds are only  asymptotical risk's bounds using uncertainty sets defined by f-divergences as in \cite{DucGlyNam:16} or using the proposed \acp{LPC} with expectation point estimates.
\section{Generalization bounds}\label{sec-3}
In this section we develop finite-sample risk's bounds of \acp{LPC} with respect to the smallest worst-case risk for generating function $\Phi$. If the actual distribution of pairs features-label $p^*$ is contained in $\set{U}_\Phi^{\V{a},\V{b}}$, the minimax expected loss $R_\Phi^{\V{a},\V{b}}$ is the worst-case risk of $h^{\V{a},\V{b}}$ since $R(h^{\V{a},\V{b}})\leq R_\Phi^{\V{a},\V{b}}$ with equality if $p^*$ is a distribution in $\set{U}_\Phi^{\V{a},\V{b}}$ with smallest $(\infty,1)$-norm. 

The smallest worst-case risk of \acp{LPC} for generating function $\Phi$ is $R_\Phi^{\B{\tau}_\infty}$ with $\B{\tau}_\infty=\mathbb{E}
_{p^*}\{\Phi\}$ because 
$$p^*\in\set{U}_\Phi^{\V{a},\V{b}}\Rightarrow \set{U}_\Phi^{\B{\tau}_\infty}\subseteq\set{U}_\Phi^{\V{a},\V{b}}\Rightarrow R_\Phi^{\B{\tau}_\infty}\leq R_\Phi^{\V{a},\V{b}}.$$ Such smallest worst-case risk corresponds with \ac{LPC} $h^{\B{\tau}_\infty}$ that would require an infinite amount of training samples to exactly determine the expectation of generating function $\Phi$.

The following result bounds the excess risk of \acp{LPC} with respect to smallest worst-case risk, as well as the difference between the risk of \acp{LPC} and the corresponding minimax expected loss
\begin{theorem}\label{th-bounds}
Let $(x_{1},y_{1}),(x_{2},y_{2}),\ldots,(x_{n},y_{n})$ be $n$ independent samples following distribution $p^*$, $\Phi:\set{X}\times\set{Y}\to\mathbb{R}^m$ a generating function, $\delta\in(0,1)$ , $\B{\tau}_\infty=\mathbb{E}_{p^*}\{\Phi\}$, and
\begin{align}\label{interval} \B{\tau}_n=\frac{1}{n}\sum_{i=1}^n\Phi(x_{i},y_{i}),\  \V{a}_n=\B{\tau}_n-\V{s}\sqrt{\frac{1}{n}},\  \V{b}_n=\B{\tau}_n+\V{s}\sqrt{\frac{1}{n}}\end{align}
with 
$$\V{s}=\V{c}\sqrt{\frac{\log m+\log\frac{2}{\delta}}{2}}\mbox{, and } c_i=\max_{x\in\set{X},y\in\set{Y}} (\Phi(x,y))_i-\min_{x\in\set{X},y\in\set{Y}}(\Phi(x,y))_i\mbox{,  for $i=1,2,\ldots,m$}.$$
We have that
\begin{itemize}
\item[i)] With probability at least $1-\delta$,
\begin{align}
L_\Phi^{\V{a}_n,\V{b}_n}\leq R(h^{\V{a}_n,\V{b}_n})&\leq R_\Phi^{\V{a}_n,\V{b}_n}\leq R_\Phi^{\B{\tau}_\infty}+2M_\Phi\|\V{c}\|_2\sqrt{\frac{\log m+\log\frac{2}{\delta}}{2}}\frac{1}{\sqrt{n}}\label{bound1}\\
R(h^{\B{\tau}_n})&\leq R_\Phi^{\B{\tau}_n}+M_\Phi\|\V{c}\|_2\sqrt{\frac{\log m+\log\frac{2}{\delta}}{2}}\frac{1}{\sqrt{n}}\label{bound2}\\
R(h^{\B{\tau}_n})&\geq L_\Phi^{\B{\tau}_n}-M_\Phi\|\V{c}\|_2\sqrt{\frac{\log m+\log\frac{2}{\delta}}{2}}\frac{1}{\sqrt{n}}\label{bound3}\\
R(h^{\B{\tau}_n})&\leq R_\Phi^{\B{\tau}_\infty}+N_\Phi\|\V{c}\|_2\sqrt{\frac{\log m+\log\frac{2}{\delta}}{2}}\frac{1}{\sqrt{n}}\label{bound4}\end{align}
where
$M_\Phi=\max_{\B{\lambda}\in\Lambda}\|\B{\lambda}\|_2$, and $N_\Phi=\max_{\B{\lambda}_1,\B{\lambda}_2\in\Lambda}\|\B{\lambda}_1-\B{\lambda}_2\|_2$
for $$\Lambda=\{\B{\lambda}\in\mathbb{R}^m:\  \exists \V{a}\in\text{Conv}(\Phi(\set{X}\times\set{Y}))\mbox{ s.t. }  \gamma,\B{\lambda}\mbox{ is solution of \eqref{learning-eq} for some }\gamma\in\mathbb{R}\}.$$
\item[ii)] If \eqref{learning-eq} for $\V{a}=\B{\tau}_\infty$ has unique solution, then with probability $1$
$$R(h^{\V{a}_n,\V{b}_n})\underset{n\to\infty}{\to}R(h^{\B{\tau}_\infty})$$
$$R(h^{\B{\tau}_n})\underset{n\to\infty}{\to}R(h^{\B{\tau}_\infty})$$
\end{itemize}
\end{theorem}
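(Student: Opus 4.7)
My plan is to combine coordinate-wise Hoeffding concentration of the empirical mean $\B{\tau}_n$ around $\B{\tau}_\infty$ with sensitivity arguments on the dual LPs in Theorem~\ref{th1} and Proposition~\ref{prop}, working throughout on a single high-probability event. First I apply Hoeffding's inequality to each coordinate $i$: since $\Phi_i$ lies in an interval of length $c_i$ on $\set{X}\times\set{Y}$, a deviation of size $s_i/\sqrt{n}$ has probability at most $2\exp(-2s_i^2/c_i^2)$, and the choice $s_i=c_i\sqrt{(\log m+\log(2/\delta))/2}$ makes this $\delta/m$. A union bound over the $m$ coordinates shows that, with probability at least $1-\delta$, $|\B{\tau}_n^{(i)}-\B{\tau}_\infty^{(i)}|\leq s_i/\sqrt{n}$ for every $i$; hence $\V{a}_n\preceq \B{\tau}_\infty\preceq \V{b}_n$ (so $p^*\in\set{U}_\Phi^{\V{a}_n,\V{b}_n}$) and $\|\B{\tau}_n-\B{\tau}_\infty\|_2\leq \|\V{s}\|_2/\sqrt{n}$. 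All subsequent arguments are carried out on this event.

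The first two inequalities of \eqref{bound1} are then Proposition~\ref{prop} applied at $p^*$. For the third, let $(\B{\lambda}^*,\gamma^*)$ be optimal for \eqref{learning-eq} at $\V{a}=\B{\tau}_\infty$ and split $\B{\lambda}^*=(\B{\lambda}^*)^{+}-(\B{\lambda}^*)^{-}$; this triple is feasible for \eqref{learning-ineq} at $(\V{a}_n,\V{b}_n)$, so \eqref{upper} yields
$$R_\Phi^{\V{a}_n,\V{b}_n}-R_\Phi^{\B{\tau}_\infty}\leq (\B{\tau}_\infty-\V{a}_n)^{T}(\B{\lambda}^*)^{+}+(\V{b}_n-\B{\tau}_\infty)^{T}(\B{\lambda}^*)^{-}\leq \tfrac{2}{\sqrt{n}}\,\V{s}^{T}|\B{\lambda}^*|,$$
and Cauchy--Schwarz together with $\B{\lambda}^*\in\Lambda$ closes \eqref{bound1}. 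For \eqref{bound2}--\eqref{bound4}, let $(\B{\lambda}_n^*,\gamma_n^*)$ solve \eqref{learning-eq} at $\V{a}=\B{\tau}_n$; then \eqref{robust-act} together with $(x)^{+}\geq x$ gives $\V{h}^{\B{\tau}_n}\succeq \B{\Phi}^{T}\B{\lambda}_n^*+\V{1}\gamma_n^*$, and integrating against $p^*$ yields
$$R(h^{\B{\tau}_n})\leq 1-\B{\tau}_\infty^{T}\B{\lambda}_n^*-\gamma_n^*=R_\Phi^{\B{\tau}_n}+(\B{\tau}_n-\B{\tau}_\infty)^{T}\B{\lambda}_n^*,$$
which is \eqref{bound2} via $\|\B{\lambda}_n^*\|_2\leq M_\Phi$. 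Adding the suboptimality inequality $\B{\tau}_n^{T}\B{\lambda}_\infty^*+\gamma_\infty^*\leq \B{\tau}_n^{T}\B{\lambda}_n^*+\gamma_n^*$ makes the $\gamma$'s cancel and sharpens the right-hand side to $R_\Phi^{\B{\tau}_\infty}+(\B{\tau}_n-\B{\tau}_\infty)^{T}(\B{\lambda}_n^*-\B{\lambda}_\infty^*)$, producing \eqref{bound4} with constant $N_\Phi$. Bound \eqref{bound3} mirrors \eqref{bound2} on the lower-bound LP: its optimum $(\B{\lambda}',\gamma')$ satisfies $-\V{h}^{\B{\tau}_n}\succeq \B{\Phi}^{T}\B{\lambda}'+\V{1}\gamma'$, which integrated against $p^*$ gives $R(h^{\B{\tau}_n})\geq L_\Phi^{\B{\tau}_n}+(\B{\tau}_\infty-\B{\tau}_n)^{T}\B{\lambda}'$, and the estimate closes once $\|\B{\lambda}'\|_2\leq M_\Phi$ is verified.

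For part~(ii), the strong law of large numbers gives $\B{\tau}_n\to\B{\tau}_\infty$ a.s., hence $\V{a}_n,\V{b}_n\to\B{\tau}_\infty$; under the uniqueness hypothesis for \eqref{learning-eq} at $\V{a}=\B{\tau}_\infty$, the standard upper semi-continuity of the LP solution correspondence collapses to continuity, forcing $(\B{\lambda}_n^*,\gamma_n^*)\to(\B{\lambda}_\infty^*,\gamma_\infty^*)$ and, for any optimum of \eqref{learning-ineq} at $(\V{a}_n,\V{b}_n)$, $\B{\alpha}^*-\B{\beta}^*\to\B{\lambda}_\infty^*$ and $\gamma^*\to\gamma_\infty^*$; since \eqref{t-a,b} is jointly continuous in these variables and $h\mapsto \ell(h,p^*)$ is linear in $h$, both $R(h^{\B{\tau}_n})$ and $R(h^{\V{a}_n,\V{b}_n})$ converge to $R(h^{\B{\tau}_\infty})$. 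The main obstacle I anticipate is the norm bound $\|\B{\lambda}'\|_2\leq M_\Phi$ invoked in \eqref{bound3}: the multiplier $\B{\lambda}'$ arises from the lower-bound LP and is not manifestly a solution of \eqref{learning-eq}, so relating it to $\Lambda$ will require an LP-duality argument, presumably identifying $(\B{\lambda}',\gamma')$ via complementary slackness with a learning optimum at a suitable $\V{a}\in\mathrm{Conv}(\Phi(\set{X}\times\set{Y}))$.
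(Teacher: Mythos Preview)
Your proposal is correct and follows essentially the same approach as the paper: Hoeffding plus union bound for the concentration event, then splitting $\B{\lambda}^*=(\B{\lambda}^*)^+-(\B{\lambda}^*)^-$ and using feasibility plus Cauchy--Schwarz for each of the four bounds, and convergence of the dual optimizers (via differentiability of the value function under uniqueness) for part~(ii). The only cosmetic difference is that for \eqref{bound2}--\eqref{bound3} you integrate the inequality $\V{h}^{\B{\tau}_n}\succeq\B{\Phi}^{\text{T}}\B{\lambda}_n^*+\V{1}\gamma_n^*$ (and its negation) directly against $p^*$, whereas the paper routes through Proposition~\ref{prop} at the interval set $\set{U}_\Phi^{\V{a}_n,\V{b}_n}$ and then plugs the split multiplier in as a feasible point; the two routes yield the same estimate. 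The gap you flag for \eqref{bound3}---that the lower-bound multiplier $\B{\lambda}'$ is not manifestly in $\Lambda$, so $\|\B{\lambda}'\|_2\leq M_\Phi$ needs justification---is genuine, and the paper's own proof does not close it either: it stops at the term $\V{c}^{\text{T}}\big((\tilde{\B{\lambda}}_n)^++(-\tilde{\B{\lambda}}_n)^+\big)$ and tacitly invokes the same bound without verifying $\tilde{\B{\lambda}}_n\in\Lambda$.
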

\vspace{-0.3cm}
\begin{proof}
See Appendix~\ref{proof-bounds}.
\end{proof}
\vspace{-0.3cm}
Inequality \eqref{bound4} and third inequality in \eqref{bound1} bound the excess risk of \acp{LPC} with respect to the smallest worst-case risk $R_\Phi^{\B{\tau}_\infty}$; inequality \eqref{bound2} and second inequality in \eqref{bound1} bound the difference between the risk of \acp{LPC} and the corresponding minimax expected loss; and inequality \eqref{bound3} and first inequality in \eqref{bound1} bound the difference between the lower bound for the corresponding uncertainty set and the risk of \acp{LPC}. These bounds show differences that decrease with $n$ as $O(1/\sqrt{n})$ with proportionality constants that depend on the confidence $\delta$, and other parameters describing the complexity of generating function $\Phi$ such as its dimensionality $m$, the difference between its maximum and minimum values $\V{c}$, and bounds for the solutions of \eqref{learning-eq} for vectors $\V{a}$ in the convex hull of $\Phi(\set{X}\times\set{Y})$.

The vector $\V{s}$ above can result in over-pessimistic interval estimates $\V{a}_n$ and $\V{b}_n$ for the expectation of $\Phi$ since it is based on Hoeffding's inequality and the union bound \cite{BouLugMas:13} for the $m$ components of $\Phi$. In practice, \acp{LPC} can be developed by using tighter interval estimates for the expectation of $\Phi$. Such tighter intervals can be obtained for instance by using bootstrapping methods, the central limit theorem, and better estimates of sub-Gaussian parameters than $\V{c}$.


The generalization bounds for the excess risk provided in Theorem~3 of \cite{FarTse:16} and Theorems~2 and 3 of \cite{LeeRag:18} for \ac{RRM} with moments fits and Wasserstein distances, respectively, are analogous to those in inequality \eqref{bound4} and third inequality in \eqref{bound1} above. In particular, they also show risk's bounds with respect to the minimax risk corresponding to an infinite number of samples. The generalization bounds in Corollary~3.2 in \cite{NamDuc:17} and Theorem~2 of \cite{AbaMohKuh:15} for \ac{RRM} with f-divergences and Wasserstein distances, respectively, are analogous to those in inequality \eqref{bound2} and second inequality in \eqref{bound1}. In particular, they also show how the risk can be upper bounded (assymptotically in \cite{NamDuc:17} and inequality \eqref{bound2} or with certain confidence in \cite{AbaMohKuh:15} and second inequality in \eqref{bound1}) by the corresponding finite-sample minimax expected loss.
\section{Efficient implementation and choice of generating function}\label{sec-4}
The learning stage of \acp{LPC} entails to solve optimization problem \eqref{learning-ineq} using expectation interval estimates $\V{a}$ and $\V{b}$ or optimization problem \eqref{learning-eq} using expectation point estimates $\V{a}$. Training samples are used to obtain such estimates for the expectations of $\Phi$, and can be used also to select generating function $\Phi$ as described below. In the prediction stage, each $x\in\set{X}$ is classified as $y\in\set{Y}$ with probability $h^{\V{a},\V{b}}(x,y)$ given by \eqref{t-a,b} using generating function $\Phi$ and $\B{\alpha}^*, \B{\beta}^*, \B{\gamma}^*$ obtained in the learning stage. The upper bound for the risk of $h^{\V{a},\V{b}}$ is directly obtained from the learning stage as $R_\Phi^{\V{a},\V{b}}$ given by \eqref{upper}, while the lower bound for the risk of $h^{\V{a},\V{b}}$ is obtained solving an additional linear optimization problem as $L_\Phi^{\V{a},\V{b}}=1+\kappa_\Phi^{\V{a},\V{b}}(-h^{\V{a},\V{b}})$ for $\kappa_\Phi^{\V{a},\V{b}}(\cdot)$ given by \eqref{lower}.

The main complexity of \acp{LPC} lies in the possibly large number of constraints in the optimization problem solved for learning. As described in Section~\ref{sec-2}, optimization problems given in \eqref{learning-ineq}, \eqref{learning-eq}, and \eqref{lower} can have up to $|\set{X}|(2^{|\set{Y}|}-1)+2m$ linear constraints and $|\set{X}|$ is usually large. Such complexity can be controlled by i) using generating function $\Phi$ that takes a reduced number of values, and ii) approximately solving the optimization problems enforcing only a subset of constraints. Specifically, for i) if $\{\V{M}_{1},\V{M}_{2},\ldots,\V{M}_{r}\in\mathbb{R}^{|\set{Y}|\times m}\}=\{\B{\Phi}_x^{\text{T}}\in\mathbb{R}^{|\set{Y}|\times m}:\ x\in\set{X}\}$ optimization problems in \eqref{learning-ineq} and \eqref{learning-eq} for learning have up to $r(2^{|\set{Y}|}-1)+2m$ linear constraints, e.g., \eqref{learning-ineq} is equivalent to
\begin{align}\label{learning-eff}\begin{array}{cc}\underset{\gamma\in\mathbb{R},\B{\alpha},\B{\beta}\in\mathbb{R}^{m}}{\max}&\V{a}^{\text{T}}\B{\alpha}-\V{b}^{\text{T}}\B{\beta}+\gamma\\
\mbox{s. t.}&\|\left(\V{M}_{i}(\B{\alpha}-\B{\beta})+\V{1}\gamma\right)^+\|_1\leq 1,\  i=1,2,\ldots,r\\
&\B{\alpha},\B{\beta}\succeq \V{0} \end{array}\end{align}
For ii), if $\{x_1,x_2,\ldots,x_N\}$ is a subset of $\set{X}$ (e.g., features obtained in training), \eqref{learning-ineq}, \eqref{learning-eq}, and \eqref{lower} can be approximated by optimization problems with up to $N(2^{|\set{Y}|}-1)+2m$ linear constraints, e.g., \eqref{learning-ineq} can be approximated by
\begin{align*}\begin{array}{cc}\underset{\gamma\in\mathbb{R},\B{\alpha},\B{\beta}\in\mathbb{R}^{m}}{\max}&\V{a}^{\text{T}}\B{\alpha}-\V{b}^{\text{T}}\B{\beta}+\gamma\\
\mbox{s. t.}&\|\left(\B{\Phi}_{x_i}^{\text{T}}(\B{\alpha}-\B{\beta})+\V{1}\gamma\right)^+\|_1\leq 1,\  i=1,2,\ldots,N\\
&\B{\alpha},\B{\beta}\succeq \V{0} \end{array}\end{align*}




The generating function $\Phi$ plays an analogous role to that of predicates in \cite{VapIzm:18}, which represent the contribution to the training process of a so-called Intelligent Teacher. Such type of functions are used also in other methods for \ac{RRM} \cite{FarTse:16,AsiXinBeh:15} and are usually obtained from certain moments of the features; while, as pointed out in \cite{VapIzm:18}, improved performance can be obtained by more elaborated functions possibly defined algorithmically.

The generating function $\Phi$ used by an \ac{LPC} has to be highly discriminative for classification ($R_\Phi^{\B{\tau}_\infty}\approx R_{\text{Bayes}}$) and, at the same time, simple enough to enable efficient learning (reduced dimensionality $m$ and range of values $r$). Ideal generating function $\Phi$ would be that given by the Bayes rule $\Phi=h_{\text{Bayes}}:\set{X}\times\set{Y}\to\{0,1\}$ because in that case  $R_\Phi^{\B{\tau}_\infty}= R_{\text{Bayes}}$, $m=1$, and $r=|\set{Y}|$. Other generating functions that achieve $R_\Phi^{\B{\tau}_\infty}= R_{\text{Bayes}}$ are those such that $h_{\text{Bayes}}$ is in the linear span of $\Phi$, because $$\ell(\B{\theta}^{\text{T}}\Phi,p)=1-\B{\theta}^{\text{T}}\B{\Phi}\V{p}=1-\B{\theta}^{\text{T}}\B{\tau}_\infty=\ell(\B{\theta}^{\text{T}}\Phi,p^*) \mbox{ for }p\in\set{U}_\Phi^{\B{\tau}_\infty}\mbox{ and }\B{\theta}\in\mathbb{R}^m.$$ 
In the numerical results of next section we use a simple generating function $\Phi$ given by $k$ classifiers $h_1,h_2,\ldots,h_k$ as $\Phi:
\set{X}\times\set{Y}\to\{0,1\}^{|\set{Y}|^{k+1}}$ where for each $(x,y)\in\set{X}\times\set{Y}$, $\Phi(x,y)$ is a vector of size $m=|\set{Y}|^{k+1}$ with $m-1$ zeros and a $1$ at the component corresponding to the $(k+1)$-tuple $(y,h_1(x),h_2(x)\ldots,h_k(x))$, that is, for $j=1,2,\ldots,m$
\begin{align}\label{generating}(\Phi(x,y))_{j}=\left\{\begin{array}{cc}1&\mbox{if }\  \text{Ind}(y,h_1(x),h_2(x)\ldots,h_k(x))=j\\0&\mbox{otherwise}\end{array}\right.\end{align}
where $\text{Ind}(y_0,y_1,\ldots,y_k)$ assigns the integer index of tuple $(y_0,y_1,\ldots,y_k)$ for a chosen order such as lexicographic. 
For this function $\Phi$ we have that $r=|\set{Y}|^{k}$ and the matrix defining the constraints in \eqref{learning-eff} corresponding with $k$-tuple $(y_1,y_2,\ldots,y_k)$ has $(i,j)$-th component for $i\in\set{Y}$ and $j=1,2,\ldots,m$
$$\left(\V{M}_{\text{Ind}(y_1,y_2,\ldots,y_k)}\right)_{i,j}=\left\{\begin{array}{cc}1&\mbox{if }\  \text{Ind}(i,y_1,y_2\ldots,y_k)=j\\0&\mbox{otherwise}\end{array}\right..$$
Note that such matrices are highly sparse since each row has only one non-zero component, so high-efficient optimization methods can be exploited for \eqref{learning-ineq} and \eqref{learning-eq}. On the other hand, for this type of generating functions the constant $\|\V{c}\|_2$
in Theorem~\ref{th-bounds} above becomes $\sqrt{m}=|\set{Y}|^{\frac{k+1}{2}}$, and the dimensionality and number of linear constraints in \eqref{learning-eff} become $O(|\set{Y}|^{k+1})$. Therefore, this type of generating functions requires to use a reduced number of classifiers $k$. In the next section we use $k=3$ so the learning process entails to solve linear optimization problems with up to $|\set{Y}|^3(2^{|\set{Y}|}-1)+2|\set{Y}|^4$ linear constraints and $2|\set{Y}|^4+1$ dimensions. In the next section, the expectations of the proposed generating function are estimated from training data using stratified $10$-fold cross-validation. Specifically, each validation sample provides an evaluation of the generating function and the final estimate is obtained by averaging the estimates corresponding with each data partition. 




\section{Experimental results}\label{sec-5}
In this section we show numerical results for \acp{LPC} using synthetic data and UCI datasets. The first set of results shows the suitability of the upper and lower bounds $R_\Phi^{\V{a},\V{b}}$ and $L_\Phi^{\V{a},\V{b}}$ for \acp{LPC}, while the second set of results compares the classification error of \acp{LPC} with respect to state-of-the-art techniques. 


In the first set of experimental results, we use synthetic data for classification with $4$-dimensional features and 3 classes. Specifically, the features for each class are obtained as random samples from a mixture of two Gaussians with weights $0.5$ and covariances $0.7^2 I$, the means of the Gaussians are $(1,1,1,1)$ and $(3,3,3,3)$ for $y=1$,  $(1,2,1,2)$ and $(4,3,4,3)$ for $y=2$, and $(2,2,2,2)$ and $(4,4,4,4)$ for $y=3$. The \ac{LPC} in this set of results uses generating function in \eqref{generating} with $k=3$ and $h_1$, $h_2$, and $h_3$ given by \ac{NN} algorithms with $3$, $5$, and $7$ neighbors.\begin{wrapfigure}{r}{0.5\textwidth}
\psfrag{Y}[l][t][0.7]{\hspace{-7mm}Risk}
\psfrag{X}[l][b][0.7]{\hspace{-10mm}Training size n}
\psfrag{A123456789123456}[l][][0.5]{\hspace{-8.7mm}\ac{LPC} Risk}
\psfrag{B}[l][][0.5]{\hspace{0.5mm}Upper bound}
\psfrag{C}[l][][0.5]{\hspace{0.5mm}Lower bound}
\psfrag{D}[l][][0.5]{\hspace{0.5mm}Bayes Risk}
\psfrag{50}[l][][0.5]{\hspace{-2mm}$50$}
\psfrag{0.2}[l][][0.5]{\hspace{-3mm}$0.2$}
\psfrag{0.3}[l][][0.5]{\hspace{-3mm}$0.3$}
\psfrag{0.4}[l][][0.5]{\hspace{-3mm}$0.4$}
\psfrag{0.5}[l][][0.5]{\hspace{-3mm}$0.5$}
\psfrag{0.6}[l][][0.5]{\hspace{-3mm}$0.6$}
\psfrag{100}[l][][0.5]{\hspace{-2mm}$100$}
\psfrag{1000}[l][][0.5]{\hspace{-4mm}$1000$}
\psfrag{5000}[l][][0.5]{\hspace{-4mm}$5000$}
\psfrag{2}[l][b][0.7]{\hspace{-5.5mm}Diabetes}
\psfrag{3}[l][b][0.7]{\hspace{-5.5mm}German}
\psfrag{4}[l][b][0.7]{\hspace{-4mm}Heart}
	\includegraphics[width=0.5\textwidth]{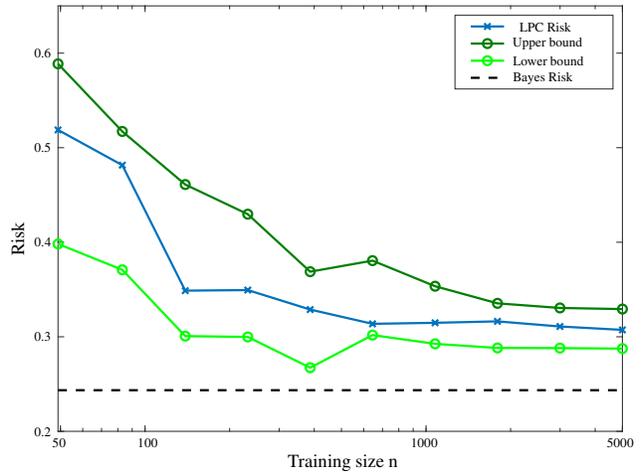}
	\caption{Upper and lower \ac{LPC} risk bounds.\label{fig_bounds}}
\end{wrapfigure} Figure~\ref{fig_bounds} shows the risk of an \ac{LPC} that uses $\V{a}_n$ and $\V{b}_n$ given by \eqref{interval} with $\V{s}=0.25\V{1}$. For each training size, one instantiation of training samples is used for training and \ac{LPC}'s risk is estimated using $10,000$ test samples. It can be observed from the figure that the lower and upper bounds can offer accurate estimates for the risk without using test samples.

In the second set of experimental results, we use $10$ data sets from the UCI repository (first column of Table~\ref{table:results}). \acp{LPC} are compared with $8$ classifiers: \ac{DT}, \ac{QDA}, \ac{NN}, \ac{SVM}, \ac{RF}, \ac{LR}, \ac{LUSI}, and \ac{ACSC}. The first 6 classifiers were implemented using scikit-learn package with the default parameters, \ac{LUSI} was implemented as in \cite{VapIzm:18} using parameters $\gamma=0.1$ and $\delta=1.0$, and \ac{ACSC} was implemented as in \cite{AsiXinBeh:15} using parameter $\gamma_t=0.1/\sqrt{t}$. Two versions of the proposed classifiers LPC1 and LPC2 were implemented using different classifiers $h_1$, $h_2$, and $h_3$ to define $\Phi$ in \eqref{generating}; LPC1 uses \ac{DT}, \ac{QDA}, and $5$\ac{NN} while LPC2 uses  \ac{SVM}, \ac{RF}, and \ac{LR}. The errors in Table~\ref{table:results} have been estimated using paired and stratified $10$-fold cross validation. It can be observed from the table that performance of \acp{LPC} is competitive with state-of-the-art techniques.



\begin{table}
\caption{Classification error of \acp{LPC} in comparison with state-of-the-art techniques.}
\label{table:results}
\centering
\begin{tabular}{lcccccccccc}
\toprule
data set& LPC1& LPC2& QDA& DT& KNN& SVM& RF& LR& ACSC& LUSI\\
\hline
mammog.& .21& .19& .20& .24& .22& .18& .21& .17& .22& .18\\
vehicle& .16& .21& .15& .28& .29& .21& .25& .21& .37& .22\\
glass& .42& .38& .49& .39& .35& .35& .40& .40& .41&.35\\
haberman& .26& .27& .24& .39& .30& .26& .35& .26& .28& .26\\
column 3C& .15& .16& .16& .20& .21& .15& .17& .15& .27& .17\\
indian liver& .29& .28& .45& .35& .34& .29& .30& .28& .33& .27\\
diabetes& .26& .23& .26& .30& .26& .24& .26& .23& .29& .23\\
adult& .15& .15& .20& .18& .17& .15&.15& .18& .20&.15\\
credit& .15& .17& .22& .22& .14& .16& .17& .15& .22& .16\\
satellite& .12& .12& .16& .17& .12& .12&.11& .18& .18& .11\\
\bottomrule
\end{tabular}
\end{table}

\section{Conclusion}
The proposed \acp{LPC} consider unconstrained classification rules, optimize the classification 0-1 loss, and provide performance guarantees during learning. We present \acp{LPC}' finite-sample generalization bounds, and describe practical and efficient implementations. This paper shows that supervised classification does not require to select from the outset a family of classification rules or surrogate losses with favorable tractability properties. Differently from conventional techniques, the inductive bias exploited by \acp{LPC} comes from a chosen generating function that represents the classification-discriminative characteristics of examples. Learning with \acp{LPC} is achieved without further design choices by linear optimization problems given by expectation estimates obtained from training data. Finally, we propose a simple choice for generating function that results in \acp{LPC} achieving classification errors competitive with state-of-the-art techniques.

\newpage
\bibliographystyle{unsrt}
\bibliography{bib-santi}
\newpage
\section{Appendices}

\subsection{Auxiliary lemmas}
The proofs of Theorem~\ref{th1} and Proposition~\ref{prop} require the lemmas provided below.

\begin{lemma}\label{lemma-dual}
The norms $\|\cdot\|_{\infty,1}$ and $\|\cdot\|_{1,\infty}$ are dual. 
\end{lemma}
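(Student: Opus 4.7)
My plan is to verify the duality by the standard route: establish the pairing inequality between the two norms by iterating Hölder's inequality on the block structure, and then exhibit a witness that attains equality. Reflexivity in finite dimension then upgrades the one-sided conclusion to full duality.

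Concretely, for $\V{u},\V{v}\in\mathbb{R}^{I\cdot J}$ written in blocks $\V{u}_i,\V{v}_i\in\mathbb{R}^J$ for $i=1,\ldots,I$, I would first apply Hölder (i.e., $\ell_1$–$\ell_\infty$ duality on $\mathbb{R}^J$) inside each block to get $\V{u}_i^{\text{T}}\V{v}_i\leq \|\V{u}_i\|_1\|\V{v}_i\|_\infty$, and then bound the outer sum by pulling out the maximum of $\|\V{u}_i\|_1$:
\begin{align*}
\V{u}^{\text{T}}\V{v}=\sum_{i=1}^I\V{u}_i^{\text{T}}\V{v}_i\leq \sum_{i=1}^I\|\V{u}_i\|_1\|\V{v}_i\|_\infty\leq \Bigl(\max_{i}\|\V{u}_i\|_1\Bigr)\sum_{i=1}^I\|\V{v}_i\|_\infty=\|\V{u}\|_{1,\infty}\|\V{v}\|_{\infty,1}.
\end{align*}
This already yields $\sup_{\|\V{u}\|_{1,\infty}\leq 1}\V{u}^{\text{T}}\V{v}\leq \|\V{v}\|_{\infty,1}$.

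For the reverse inequality, I would construct an explicit maximizer. Fix $\V{v}$, pick for each $i$ an index $j_i^*\in\arg\max_{j}|v_{(i,j)}|$, and define $\V{u}$ by $u_{(i,j_i^*)}=\operatorname{sgn}(v_{(i,j_i^*)})$ and $u_{(i,j)}=0$ for $j\neq j_i^*$ (taking $\V{u}_i=0$ when $\V{v}_i=0$). Then $\|\V{u}_i\|_1\leq 1$ for every $i$, so $\|\V{u}\|_{1,\infty}\leq 1$, while $\V{u}^{\text{T}}\V{v}=\sum_i|v_{(i,j_i^*)}|=\sum_i\|\V{v}_i\|_\infty=\|\V{v}\|_{\infty,1}$. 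Hence the supremum is attained, giving $\|\V{v}\|_{\infty,1}=\sup_{\|\V{u}\|_{1,\infty}\leq 1}\V{u}^{\text{T}}\V{v}$, i.e., $\|\cdot\|_{\infty,1}$ is the dual of $\|\cdot\|_{1,\infty}$. Since norms on finite-dimensional spaces are reflexive, the dual of $\|\cdot\|_{\infty,1}$ is in turn $\|\cdot\|_{1,\infty}$, completing the proof.

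There is essentially no obstacle here: the only subtlety is bookkeeping with the block indexing so that the two Hölder applications (one within each block, one across blocks) are combined correctly, and handling the degenerate blocks where $\V{v}_i=0$ in the construction of the witness $\V{u}$.
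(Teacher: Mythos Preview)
Your proof is correct and follows the same overall strategy as the paper: establish the pairing inequality $\V{u}^{\text{T}}\V{v}\leq\|\V{u}\|_{1,\infty}\|\V{v}\|_{\infty,1}$ via iterated H\"older, then produce an explicit witness attaining equality. The only difference is the direction you handle directly: the paper computes the dual of $\|\cdot\|_{\infty,1}$ (its witness is supported on a single maximizing block $\tilde{i}$ with entries $\pm 1$), whereas you compute the dual of $\|\cdot\|_{1,\infty}$ (your witness has one signed entry per block) and then invoke finite-dimensional reflexivity for the other direction. Both are equally short and entirely standard; neither buys anything the other does not.
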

\begin{proof}

The dual norm of $\|\cdot\|_{\infty,1}$ assigns each $\V{w}\in\mathbb{R}^{|\set{I}||\set{J}|}$, the real number
$$\sup_{\V{v}:\  \|\V{v}\|_{\infty,1}\leq 1}\V{w}^{\text{T}}\V{v}$$
We have that for $\V{v}$ with $\|\V{v}\|_{\infty,1}\leq 1$
\begin{align*}\V{w}^{\text{T}}\V{v}&=\sum_{i\in\set{I}}\sum_{j\in\set{J}}w_{(i,j)}v_{(i,j)}\leq\sum_{i\in\set{I}}\sum_{j\in\set{J}}|w_{(i,j)}||v_{(i,j)}|\\
&\leq \sum_{i\in\set{I}}\left(\max_j|v_{(i,j)}|\right)\sum_{j\in\set{J}}|w_{(i,j)}|\leq\max_{i\in\set{I}}\sum_{j\in\set{J}}|w_{(i,j)}|\sum_{i\in\set{I}}\left(\max_j|v_{(i,j)}|\right)\\&=\|\V{w}\|_{1,\infty}\|\V{v}\|_{\infty,1}\leq\|\V{w}\|_{1,\infty}\end{align*}
So, to prove the result we just need to find a vector $\V{u}$ such that $\|\V{u}\|_{\infty,1}\leq 1$ and $\V{w}^{\text{T}}\V{u}=\|\V{w}\|_{1,\infty}$. Let $\tilde{i}\in\arg\max_{i\in\set{I}}\sum_{j\in\set{J}}|w_{(i,j)}|$, then $\V{u}$ given by 
$$u_{(i,j)}=\left\{\begin{array}{cc}1&\mbox{ if }i=\tilde{i} \mbox{ and } w_{(i,j)}\geq 0\\
-1&\mbox{ if }i=\tilde{i} \mbox{ and } w_{(i,j)}< 0\\
0&\mbox{ otherwise }\end{array}\right.$$
satisfies $\|\V{u}\|_{\infty,1}\leq 1$ and $\V{w}^{\text{T}}\V{u}=\|\V{w}\|_{1,\infty}$.

\end{proof}

\begin{lemma}\label{lemma-conjugate}
Let $\V{u}\in\mathbb{R}^{|\set{I}||\set{J}|}$, and $f_1$ and $f_2$ be the functions $f_1(\V{v})=\|\V{v}\|_{\infty,1}+I^+(\V{v})$ and $f_2(\V{v})=\V{v}^{\text{T}}\V{u}+I^+(\V{v})$ for $\V{v}\in\mathbb{R}^{|\set{I}||\set{J}|}$, where
$$I^+(\V{v})=\left\{\begin{array}{cc}0 &\mbox{if}\  \V{v}\succeq\V{0} \\\infty&\mbox{otherwise}\end{array}\right.$$
Then, their conjugate functions are
$$f_1^*(\V{w})=\left\{\begin{array}{cc}0 &\mbox{if}\  \|\V{w}^+\|_{1,\infty}\leq 1 \\\infty&\mbox{otherwise}\end{array}\right.$$
$$f_2^*(\V{w})=\left\{\begin{array}{cc}0 &\mbox{if}\  \V{w}\preceq\V{u}  \\\infty&\mbox{otherwise}\end{array}\right.$$

\end{lemma}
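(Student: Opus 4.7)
The plan is to compute each conjugate directly from the definition $f^*(\V{w})=\sup_{\V{v}}\{\V{w}^{\text{T}}\V{v}-f(\V{v})\}$, handling $f_2$ first because it is immediate and then reducing $f_1$ to an application of Lemma~\ref{lemma-dual}.

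For $f_2^*$, I would observe that the effective domain of $f_2$ is the nonnegative orthant, so
\[
f_2^*(\V{w})=\sup_{\V{v}\succeq\V{0}}(\V{w}-\V{u})^{\text{T}}\V{v}.
\]
If $\V{w}\preceq\V{u}$ the integrand is $\leq 0$ for every $\V{v}\succeq\V{0}$, so the supremum equals $0$ and is attained at $\V{v}=\V{0}$. If instead some coordinate $k$ satisfies $w_k>u_k$, taking $\V{v}=t\V{e}_k$ and letting $t\to\infty$ drives the supremum to $+\infty$. This gives the claimed formula for $f_2^*$.

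For $f_1^*$ the key observation is that $f_1$ is positively homogeneous and proper, so its conjugate is the indicator function of some convex set, namely $\{\V{w}:\V{w}^{\text{T}}\V{v}\leq\|\V{v}\|_{\infty,1}\ \forall\,\V{v}\succeq\V{0}\}$; in particular $f_1^*(\V{w})\in\{0,+\infty\}$, with the value $0$ iff
\[
\sup_{\V{v}\succeq\V{0},\ \|\V{v}\|_{\infty,1}\leq 1}\V{w}^{\text{T}}\V{v}\leq 1.
\]
The main step is to identify this supremum with $\|\V{w}^+\|_{1,\infty}$. For $\V{v}\succeq\V{0}$ one always has $\V{w}^{\text{T}}\V{v}\leq(\V{w}^+)^{\text{T}}\V{v}$, with equality whenever $v_{(i,j)}=0$ at every index where $w_{(i,j)}<0$; such $\V{v}$ are feasible without loss. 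Hence the supremum over $\V{v}\succeq\V{0}$ with $\|\V{v}\|_{\infty,1}\leq 1$ coincides with $\sup_{\|\V{v}\|_{\infty,1}\leq 1}(\V{w}^+)^{\text{T}}\V{v}$, where the sign restriction on $\V{v}$ can be dropped because the maximizer in Lemma~\ref{lemma-dual} for a nonnegative weight vector can be taken with nonnegative entries (inspecting the explicit maximizer $\V{u}$ constructed there). Invoking Lemma~\ref{lemma-dual} with $\V{w}^+$ in place of $\V{w}$ yields the value $\|\V{w}^+\|_{1,\infty}$, which gives $f_1^*(\V{w})=0$ iff $\|\V{w}^+\|_{1,\infty}\leq 1$ and $f_1^*(\V{w})=+\infty$ otherwise.

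The only subtle point I expect is justifying the reduction from $\V{w}$ to $\V{w}^+$ on the restricted domain $\V{v}\succeq\V{0}$, together with dropping the nonnegativity of $\V{v}$ once $\V{w}$ has been replaced by $\V{w}^+$; both are transparent from the coordinate-wise inequalities and the explicit form of the maximizer in the proof of Lemma~\ref{lemma-dual}, so the argument is short once assembled.
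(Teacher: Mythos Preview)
Your proposal is correct and follows essentially the same approach as the paper: both arguments reduce from $\V{w}$ to $\V{w}^+$ via the nonnegativity constraint on $\V{v}$ and then invoke the dual-norm relation of Lemma~\ref{lemma-dual}. The only organizational difference is that you invoke positive homogeneity of $f_1$ up front to recognize $f_1^*$ as an indicator and then identify the set in one shot, whereas the paper performs an explicit two-case split on whether $\|\V{w}^+\|_{1,\infty}\leq 1$; the underlying computations coincide.
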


\begin{proof}
	By definition of conjugate function we have
$$f_1^*(\V{w}) =\sup_{\V{v}} (\V{w}^{\text{T}}\V{v} - \|\V{v}\|_{\infty,1} - I^+(\V{v})) = \sup_{\V{v} \succeq 0} (\V{w}^{\text{T}}\V{v} -  \|\V{v}\|_{\infty,1}).$$
\begin{itemize}
\item If $\|\V{w}^+\|_{1,\infty} \leq 1$, for each $\V{v} \succeq \V{0}$, $\V{v} \neq \V{0}$ we have
	$$\V{w}^{\text{T}}\V{v} \leq (\V{w}^+)^{\text{T}}\V{v} = \|\V{v}\|_{\infty,1} \left((\V{w}^+)^{\text{T}}\frac{\V{v}}{\|\V{v}\|_{\infty,1}}\right)$$ 
	and by definition of dual norm we get
	$$\V{w}^{\text{T}}\V{v} \leq \|\V{v}\|_{\infty,1} \|\V{w}^+\|_{1,\infty} \leq \|\V{v}\|_{\infty,1}$$
	which implies
	$$ \V{w}^{\text{T}}\V{v} - \|\V{v}\|_{\infty,1}\leq 0.$$
	Moreover, $\V{w}^{\text{T}}\V{0} - \|\V{0}\|_{\infty,1}= 0$, so we have that $f_1^*(\V{w}) = 0$. 
\item If $\|\V{w}^+\|_{1,\infty} > 1$, by definition of dual norm and using Lemma~\ref{lemma-dual} there exists $\V{u}$ such that $(\V{w}^+)^{\text{T}}\V{u} > 1$ and $\|\V{u}\|_{\infty,1} \leq 1$. Define $\tilde{\V{u}}$ as
	$$
	\tilde{u}_{(i,j)} =\left\{\begin{array}{cc}
	u_{(i,j)} & \text{ if } u_{(i,j)} \geq 0 \text{ and } w_{(i,j)} \geq 0 \\
	0 & \text{ if } u_{(i,j)} < 0 \text{ or } w_{(i,j)} < 0
	\end{array}\right.
	$$
	By definition of $\tilde{\V{u}}$ and $\|\cdot\|_{\infty,1}$ we have
	$$\|\tilde{\V{u}}\|_{\infty,1} \leq \|\V{u}\|_{\infty,1} \leq 1 $$
	and
	$$\V{w}^{\text{T}}\tilde{\V{u}} = (\V{w}^+)^{\text{T}}\tilde{\V{u}}\geq(\V{w}^+)^{\text{T}}\V{u} > 1. $$
	Now let $t > 0$ and take $\V{v} = t \tilde{\V{u}} \succeq 0$, then we have
	$$\V{w}^{\text{T}}\V{v} - \|\V{v}\|_{\infty,1} = t \left ( \V{w}^{\text{T}}\tilde{\V{u}} - \|\tilde{\V{u}}\|_{\infty,1} \right ) $$
	which tends to infinity as $t \to + \infty$ because $\V{w}^{\text{T}}\tilde{\V{u}} - \|\tilde{\V{u}}\|_{\infty,1} > 0$, so we have that $f_1^*(\V{w}) = + \infty$.
	\end{itemize}
	Finally, the expression for $f_2^*$ is straightforward since	$$f_2^*(\V{w})=\sup_{\V{v}\succeq\V{0}}((\V{w}-\V{u})^{\text{T}}\V{v}).$$
\end{proof}

\subsection{Proof of Theorem~\ref{th1}}\label{proof-th1}

Let 
$$\widetilde{\set{U}}=\{p:\set{X}\times\set{Y}\to\mathbb{R}\mbox{ s.t. }  \V{p}\succeq \V{0},\  \|\V{p}\|_{1,\infty}\leq 1\}$$
$$\widetilde{\ell}(h,p)=1-\V{a}^{\text{T}}\B{\alpha}^*+\V{b}^{\text{T}}\B{\beta}^*-\gamma^*+\V{p}^{\text{T}}(\B{\Phi}^{\text{T}}(\B{\alpha}^*-\B{\beta}^*)+\V{1}\gamma^*-\V{h})$$
in the first step of the proof we show that $h^*$ satisfying \eqref{robust-act} is a solution of optimization problem $\min_{h\in\Delta(X,Y)}\max_{p\in\widetilde{\set{U}}}\widetilde{\ell}(h,p)$, and in the second step of the proof we show that a solution of $\min_{h\in\Delta(X,Y)}\max_{p\in\widetilde{\set{U}}}\widetilde{\ell}(h,p)$ is also a solution of $\min_{h\in\Delta(X,Y)}\max_{p\in\set{U}_{\Phi}^{\V{a},\V{b}}}\ell(h,p)$.

For the first step, note that
$$\widetilde{\ell}(h,p)=1-\V{a}^{\text{T}}\B{\alpha}^*+\V{b}^{\text{T}}\B{\beta}^*-\gamma^*+\sum_{x\in\set{X}}\V{p}_x^{\text{T}}\left(\B{\Phi}_x^{\text{T}}(\B{\alpha}^*-\B{\beta}^*)+\V{1}\gamma^*-\V{h}_x\right).$$


Then, optimization problem $\min_{h\in\Delta(X,Y)}\max_{p\in\widetilde{\set{U}}}\widetilde{\ell}(h,p)$ is equivalent to
$$\begin{array}{ccc}\min &\max &  \sum_{x\in\set{X}}\V{p}_x^{\text{T}}\left(\B{\Phi}_x^{\text{T}}(\B{\alpha}^*-\B{\beta}^*)+\V{1}\gamma^*-\V{h}_x\right)\\
h_x\in\Delta(\set{Y})\  \forall x\in\set{X}&\V{p}_x\succeq \V{0}, \|\V{p}_x\|_1\leq 1 \forall x\in\set{X}&\end{array}$$



that is separable and has solution given by
$$\begin{array}{cccc}h_x^*\in&\arg\min &\max &\V{p}_x^{\text{T}}\left(\B{\Phi}_x^{\text{T}}(\B{\alpha}^*-\B{\beta}^*)+\V{1}\gamma^*-\V{h}_x\right)\\&
h_x\in\Delta(\set{Y})&\V{p}_x\succeq \V{0}, \|\V{p}_x\|_1\leq 1&\end{array}$$
for any $x\in\set{X}$. The inner maximization above is given in closed-form by
$$\begin{array}{ccc}\max &\V{p}_x^{\text{T}}\left(\B{\Phi}_x^{\text{T}}(\B{\alpha}^*-\B{\beta}^*)+\V{1}\gamma^*-\V{h}_x\right)&=\|\left(\B{\Phi}_x^{\text{T}}(\B{\alpha}^*-\B{\beta}^*)+\V{1}\gamma^*-\V{h}_x\right)^+\|_\infty\geq 0\\
\V{p}_x\succeq \V{0}, \|\V{p}_x\|_1\leq 1&&\end{array}$$
that takes its minimum value $0$ for any $\V{h}_x^*\succeq \B{\Phi}_x^{\text{T}}(\B{\alpha}^*-\B{\beta}^*)+\V{1}\gamma^*$.



For the second step, if $h^*$ is a solution of $\min_{h\in\Delta(X,Y)}\max_{p\in\widetilde{\set{U}}}\widetilde{\ell}(h,p)$ we have that
\begin{align}\label{ineqs}\min_{h\in\Delta(X,Y)}\max_{p\in\widetilde{\set{U}}}\widetilde{\ell}(h,p)=\max_{p\in\widetilde{\set{U}}}\widetilde{\ell}(h^*,p)\geq\max_{p\in\set{U}_\Phi^{\V{a},\V{b}}}\ell(h^*,p)\geq\min_{h\in\Delta(X,Y)}\max_{p\in\set{U}_\Phi^{\V{a},\V{b}}}\ell(h,p)\end{align}
where the first inequality is due to the fact that $\set{U}_\Phi^{\V{a},\V{b}}\subset\widetilde{\set{U}}$ and $\widetilde{\ell}(h,p)\geq\ell(h,p)$ for 
$p\in\set{U}_\Phi^{\V{a},\V{b}}$ because $\V{p}^{\text{T}}\B{\Phi}^{\text{T}}(\B{\alpha}^*-\B{\beta}^*)\geq \V{a}^{\text{T}}\B{\alpha}^*-\V{b}^{\text{T}}\B{\beta}^*$ by definition of $\set{U}_\Phi^{\V{a},\V{b}}$ and since $\B{\alpha}^*,\B{\beta}^*\succeq \V{0}$.

Since $\ell(h,p)$ is bounded, $\set{X}\times\set{Y}$ is finite, and $\set{U}_\Phi^{\V{a},\V{b}}$ and $\Delta(X,Y)$ 
are closed and convex, the min and the max in $R_\Phi^{\V{a},\V{b}}=\min_{h\in\Delta(X,Y)}\max_{p\in\set{U}_\Phi^{\V{a},\V{b}}}\ell(h,p)$ can be interchanged (see e.g., Th. 5.1. in \cite{GruDaw:04}) and we have that  $R_\Phi^{\V{a},\V{b}}=\max_{p\in\set{U}_\Phi^{\V{a},\V{b}}}\min_{h\in\Delta(X,Y)}\ell(h,p)$. In addition,
$$\min_{h\in\Delta(X,Y)}\ell(h,p)=\min_{h\in\Delta(X,Y)}1-\V{p}^{\text{T}}\V{h}=1-\|\V{p}\|_{\infty,1}$$
because the optimization problem above is separable for $x\in\set{X}$ and
\begin{align}\label{entropy}\max_{h_x\in\Delta(\set{Y})}\V{p}_x^{\text{T}}\V{h}_x=\|\V{p}_x\|_\infty.\end{align}
Then $1-R_\Phi^{\V{a},\V{b}}=\min_{P\in\set{U}_\Phi^{\V{a},\V{b}}}\|\V{p}\|_{\infty,1}$ that can be written as
\begin{align}\begin{array}{cc}\min&\|\V{p}\|_{\infty,1}+I^+(\V{p})\\
\mbox{s. t.} &\V{p}^{\text{T}}\V{1}=1\\&
\V{a}\preceq\B{\Phi}\V{p}\preceq\V{b}\end{array}\label{opt}\end{align}
where 
$$I^+(\V{p})=\left\{\begin{array}{cc}0 &\mbox{if}\  \V{p}\succeq \V{0}\\\infty&\mbox{otherwise}\end{array}\right.$$

The Lagrange dual of the optimization problem \eqref{opt} is
\begin{align}\label{dual}\begin{array}{cc}\max&\V{a}^{\text{T}}\B{\alpha}-\V{b}^{\text{T}}\B{\beta}+\gamma-f^*\left(\B{\Phi}^{\text{T}}(\B{\alpha}-\B{\beta})+\V{1}\gamma\right)\\\gamma\in\mathbb{R},\B{\alpha},\B{\beta}\in\mathbb{R}^{m}&\\\mbox{s.t.}&
\B{\alpha}\succeq \V{0}, \B{\beta}\succeq \V{0}\end{array}\end{align}
where $f^*$ is the conjugate function of $f(\V{p})=\|\V{p}\|_{\infty,1}+I^+(\V{p})$ (see e.g., section 5.1.6 in \cite{BoyVan:04}). Then, optimization problem \eqref{dual} becomes \eqref{learning-ineq} using the Lemma~\ref{lemma-conjugate} above.

Strong duality holds between optimization problems \eqref{opt} and \eqref{learning-ineq} since constraints in \eqref{opt} are affine. Then, if $\B{\alpha}^*,\B{\beta}^*,\gamma^*$ is a solution of \eqref{learning-ineq} we have that $1-R_\Phi^{\V{a},\V{b}}$ is equal to
\begin{align}\label{opt2}\min_P\|\V{p}\|_{\infty,1}+I^+(\V{p})-(\V{p}^{\text{T}}\B{\Phi}^{\text{T}}-\V{a}^{\text{T}})\B{\alpha}^*+(\V{p}^{\text{T}}\B{\Phi}^{\text{T}}-\V{b}^{\text{T}})\B{\beta}^*-(\V{p}^{\text{T}}\V{1}-1)\gamma^*\end{align} that equals
$$\min_{P\in\widetilde{\set{U}}}\|\V{p}\|_{\infty,1}+\V{a}^{\text{T}}\B{\alpha}^*-\V{b}^{\text{T}}\B{\beta}^*+\gamma^*-\V{p}^{\text{T}}\left(\B{\Phi}^{\text{T}}(\B{\alpha}^*-\B{\beta}^*)+\V{1}\gamma^*\right)$$
since a solution of the primal problem \eqref{opt} belongs to $\widetilde{U}$ and is also a solution of \eqref{opt2}. Therefore, 
\begin{align*}R_\Phi^{\V{a},\V{b}}&=\max_{p\in\widetilde{\set{U}}}\min_{h\in\Delta(\set{X},\set{Y})}\ell(h,p)-\V{a}^{\text{T}}\B{\alpha}^*+\V{b}^{\text{T}}\B{\beta}^*-\gamma^*+\V{p}^{\text{T}}\left(\B{\Phi}^{\text{T}}(\B{\alpha}^*-\B{\beta}^*)+\V{1}\gamma^*\right)\\&=\max_{p\in\widetilde{\set{U}}}\min_{h\in\Delta(X,Y)}\widetilde{\ell}(h,p)=\min_{h\in\Delta(X,Y)}\max_{p\in\widetilde{\set{U}}}\widetilde{\ell}(h,p)\end{align*}
where the last equality is due to the fact that $\widetilde{\ell}(h,p)$ is bounded, $\set{X}\times\set{Y}$ is finite, and $\widetilde{\set{U}}$ and $\Delta(X,Y)$ are closed and convex. Then, inequalities in \eqref{ineqs} are in fact equalities and $h^*$ is solution of $\min_{h\in\Delta(X,Y)}\max_{p\in\set{U}_\Phi^{\V{a},\V{b}}}\ell(h,p)$.

\subsection{Proof of Proposition~\ref{prop}}\label{proof-prop}
The result is a direct consequence of the fact that for any $p\in\set{U}_{\Phi}^{\V{a},\V{b}}$
$$\min_{\widetilde{p}\in\set{U}_\Phi^{\V{a},\V{b}}}\ell(h,\widetilde{p})\leq\ell(h,p)\leq\max_{\widetilde{p}\in\set{U}_\Phi^{\V{a},\V{b}}}\ell(h,\widetilde{p})$$
and 
$$\min_{\widetilde{p}\in\set{U}_\Phi^{\V{a},\V{b}}}\ell(h,\widetilde{p})=1+\min_{\widetilde{p}\in\set{U}_\Phi^{\V{a},\V{b}}}\V{\widetilde{p}}^{\text{T}}(-\V{h})$$
$$\max_{\widetilde{p}\in\set{U}_\Phi^{\V{a},\V{b}}}\ell(h,\widetilde{p})=1-\min_{\widetilde{p}\in\set{U}_\Phi^{\V{a},\V{b}}}\V{\widetilde{p}}^{\text{T}}\V{h}.$$
The expression for $\kappa_\Phi^{\V{a},\V{b}}(q)$ in \eqref{lower} is obtained since
\begin{align}\label{opt_kappa}\begin{array}{ccc}\min_{\widetilde{p}\in\set{U}_\Phi^{\V{a},\V{b}}}\V{\widetilde{p}}^{\text{T}}\V{q}=&\min&\V{\widetilde{p}}^{\text{T}}\V{q}+I^+(\V{\widetilde{p}})\\&
\mbox{s. t.} &\V{\widetilde{p}}^{\text{T}}\V{1}=1\\&&
\V{a}\preceq\B{\Phi}\V{\widetilde{p}}\preceq\V{b}\end{array}\end{align}
where 
$$I^+(\V{\widetilde{p}})=\left\{\begin{array}{cc}0 &\mbox{if}\  \V{\widetilde{p}}\succeq \V{0}\\\infty&\mbox{otherwise}\end{array}\right.$$
Then, the Lagrange dual of the optimization problem \eqref{opt_kappa} is
\begin{align}\begin{array}{cc}\max&\V{a}^{\text{T}}\B{\alpha}-\V{b}^{\text{T}}\B{\beta}+\gamma-f^*\left(\B{\Phi}^{\text{T}}(\B{\alpha}-\B{\beta})+\V{1}\gamma\right)\\\gamma\in\mathbb{R},\B{\alpha},\B{\beta}\in\mathbb{R}^{m}&\\\mbox{s.t.}&
\B{\alpha}\succeq \V{0}, \B{\beta}\succeq \V{0}\end{array}\end{align}
where $f^*$ is the conjugate function of $f(\V{\widetilde{p}})=\V{\widetilde{p}}^{\text{T}}\V{q}+I^+(\V{\widetilde{p}})$ 
that leads to \eqref{lower} using Lemma~\ref{lemma-conjugate}.

\subsection{Proof of Theorem~\ref{th-bounds}}\label{proof-bounds}

1. Using Hoeffding's inequality \cite{BouLugMas:13} we have that for $i=1,2,\ldots,m$

$$\mathbb{P}\left\{|\tau_{\infty,i}-\tau_{n,i}|\leq c_i\sqrt{\frac{\log m+\log\frac{2}{\delta}}{2n}}\right\}\geq1-2\exp\left\{-\log m-\log\frac{2}{\delta}\right\}= 1-\frac{\delta}{m}$$
so that, using the union bound we have that 
\begin{align*}\mathbb{P}\Bigg\{|\tau_{\infty,i}-\tau_{n,i}|\leq c_i\sqrt{\frac{\log m+\log\frac{2}{\delta}}{2n}},\  &i=1,2,\ldots,m\Bigg\}
\\
&\geq 1-m+\sum_{i=1}^m\mathbb{P}\left\{|\tau_{\infty,i}-\tau_{n,i}|\leq c_i\sqrt{\frac{\log m+\log\frac{2}{\delta}}{2n}}\right\}\\&\geq 1-\delta.
\end{align*}
Hence, $p^*\in\set{U}_{\Phi}^{\V{a}_n,\V{b}_n}$ and 
$$\|\B{\tau}_{\infty}-\B{\tau}_{n}\|_2\leq \|\V{c}\|_2\sqrt{\frac{\log m+\log\frac{2}{\delta}}{2n}}$$
with probability at least $1-\delta$.


For the first result, we have that $R(h^{\V{a}_n,\V{b}_n})\leq R_\Phi^{\V{a}_n,\V{b}_n}$ with probability at least $1-\delta$ since $p^*\in\set{U}_{\Phi}^{\V{a}_n,\V{b}_n}$ with probability at least $1-\delta$. 
Let $\B{\lambda}^*,\gamma^*$ be a solution of \eqref{learning-eq} for $\V{a}=\B{\tau}^*$; $\left[(\B{\lambda}^*)^+,(-\B{\lambda}^*)^+,\gamma^*\right]$ is a feasible point of \eqref{learning-ineq} because $\B{\lambda}^*=(\B{\lambda}^*)^+-(-\B{\lambda}^*)^+$ and $\B{\lambda}^*,\gamma^*$ is a feasible point of \eqref{learning-eq}. Hence
$$R_\Phi^{\V{a}_n,\V{b}_n}\leq 1-\V{a}_n^{\text{T}}(\B{\lambda}^*)^++\V{b}_n^{\text{T}}(-\B{\lambda}^*)^+-\gamma^*=R_\Phi^{\B{\tau}_{\infty}}+(\B{\tau}^*-\V{a}_n)^{\text{T}}(\B{\lambda}^*)^++(\V{b}_n-\B{\tau}^*)^{\text{T}}(-\B{\lambda}^*)^+$$
$$=R_\Phi^{\B{\tau}_{\infty}}+\left(\B{\tau}^*-\B{\tau}_n+\V{c}\sqrt{\frac{\log m+\log\frac{2}{\delta}}{2n}}\right)^{\text{T}}(\B{\lambda}^*)^+-\left(\B{\tau}^*-\B{\tau}_n-\V{c}\sqrt{\frac{\log m+\log\frac{2}{\delta}}{2n}}\right)^{\text{T}}(-\B{\lambda}^*)^+$$
$$=R_\Phi^{\B{\tau}_{\infty}}+(\B{\tau}^*-\B{\tau}_n)^{\text{T}}\B{\lambda}^*+\sqrt{\frac{\log m+\log\frac{2}{\delta}}{2n}}\V{c}^{\text{T}}((\B{\lambda}^*)^++(-\B{\lambda}^*)^+)$$
Then the result is obtained using Cauchy-Schwarz inequality and the fact that $\|(\B{\lambda}^*)^++(-\B{\lambda}^*)^+\|_2=\|\B{\lambda}^*\|_2$. 


For the second result, note that using Proposition~\ref{prop} and since $p^*\in\set{U}^{\V{a}_n,\V{b}_n}_\Phi$ with probability at least $1-\delta$ we have that 
$$R(h^{\B{\tau}_n})\leq\max_{p\in\set{U}^{\V{a}_n,\V{b}_n}_\Phi}\ell(h^{\B{\tau}_n},p)=1-\underset{\B{\Phi}^{\text{T}}(\B{\alpha}-\B{\beta})+\gamma\preceq \V{h}^{\B{\tau}_n}}{\max}\V{a}_n^{\text{T}}\B{\alpha}-\V{b}_n^{\text{T}}\B{\beta}+\gamma$$
so that, if $\B{\lambda}_n^*,\gamma_n^*$ is a solution of \eqref{learning-eq} for $\V{a}=\B{\tau}_n$, we have that $R(h^{\B{\tau}_n})\leq 1-\V{a}_n^{\text{T}}(\B{\lambda}_n^*)^++\V{b}_n^{\text{T}}(-\B{\lambda}_n^*)^+-\gamma_n^*$ because $\B{\lambda}_n^*=(\B{\lambda}_n^*)^+-(-\B{\lambda}_n^*)^+$ and $\B{\Phi}^{\text{T}}\B{\lambda}_n^*+\gamma_n^*\preceq \V{h}^{\tau_n}$ by definition of $\V{h}^{\tau_n}$. Therefore, the result is obtained since 
\begin{align*}R(h^{\B{\tau}_n})\leq\,& 1-\left(\B{\tau}_n-\V{c}\sqrt{\frac{\log m+\log\frac{2}{\delta}}{2n}}\right)^{\text{T}}(\B{\lambda}_n^*)^++\left(\B{\tau}_n+\V{c}\sqrt{\frac{\log m+\log\frac{2}{\delta}}{2n}}\right)^{\text{T}}(-\B{\lambda}_n^*)^+\\
& -\gamma_n^*+\B{\tau}_n^{\text{T}}\B{\lambda}_n^*-\B{\tau}_n^{\text{T}}\B{\lambda}_n^*\\
=\,& R_\Phi^{\B{\tau}_n}-\B{\tau}_n^{\text{T}}\left((\B{\lambda}_n^*)^+-(-\B{\lambda}_n^*)^+\right)+\B{\tau}_n^{\text{T}}\B{\lambda}_n^*+\V{c}^{\text{T}}\sqrt{\frac{\log m+\log\frac{2}{\delta}}{2n}}\left((\B{\lambda}_n^*)^++(-\B{\lambda}_n^*)^+\right)
\end{align*}

For the third result, let 
$$\tilde{\B{\lambda}}_n,\tilde{\gamma}_n\in\arg\underset{\B{\Phi}^{\text{T}}\B{\lambda}+\gamma\preceq-\V{h}^{\tau_n}}{\max} \B{\tau}_n^{\text{T}}\B{\lambda}+\gamma$$
then $L_\Phi^{\B{\tau}_n}=1+\B{\tau}_n^{\text{T}}\tilde{\B{\lambda}}_n+\tilde{\gamma}_n$ and the result is obtained since $p^*\in\set{U}_\Phi^{\V{a}_n,\V{b}_n}$ with probability at least $1-\delta$ and hence
\begin{align}R(h^{\B{\tau}_n})&\geq 1+\underset{\B{\Phi}^{\text{T}}(\B{\alpha}-\B{\beta})+\gamma\preceq-\V{h}^{\tau_n},\B{\alpha},\B{\beta}\succeq \V{0}}{\max} \V{a}_n^{\text{T}}\B{\alpha}-\V{b}_n^{\text{T}}\B{\beta}+\gamma\geq 1+\V{a}_n^{\text{T}}(\tilde{\B{\lambda}}_n)^+-\V{b}_n^{\text{T}}(-\tilde{\B{\lambda}}_n)^++\tilde{\gamma}_n\nonumber\\
&=1+\B{\tau}_n^{\text{T}}\left((\tilde{\B{\lambda}}_n)^+-(-\tilde{\B{\lambda}}_n)^+\right)+\tilde{\gamma}_n-\V{c}^{\text{T}}\sqrt{\frac{\log m+\log\frac{2}{\delta}}{2n}}\left((\tilde{\B{\lambda}}_n)^++(-\tilde{\B{\lambda}}_n)^+\right)
\end{align}

For the fourth result, note that using Proposition~\ref{prop} and since $p^*\in\set{U}^{\B{\tau}_{\infty}}_\Phi$ we have that
$$R(h^{\B{\tau}_n})\leq\max_{p\in\set{U}^{\B{\tau}_{\infty}}_\Phi}\ell(p,h^{\B{\tau}_n})=1-\underset{\B{\Phi}^{\text{T}}\B{\lambda}+\gamma\preceq \V{h}^{\B{\tau}_n}}{\max}(\B{\tau}_{\infty})^{\text{T}}\B{\lambda}+\gamma$$
so that, if $\B{\lambda}_n^*,\gamma_n^*$ is a solution of \eqref{learning-eq} for $\V{a}=\B{\tau}_n$, we have that $R(h^{\B{\tau}_n})\leq 1-(\B{\tau}_{\infty})^{\text{T}}\B{\lambda}_n^*-\gamma_n^*$ because $\B{\Phi}^{\text{T}}\B{\lambda}_n^*+\gamma_n^*\preceq \V{h}^{\tau_n}$ by definition of $\V{h}^{\tau_n}$.  Let  $\B{\lambda}^*,\gamma^*$ be a solution of \eqref{learning-eq} for $\V{a}=\B{\tau}^*$, the result is obtained since
\begin{align}R(h^{\B{\tau}_n})&\leq 1-(\B{\tau}_{\infty})^{\text{T}}\B{\lambda}_n^*
-\gamma_n^*+\B{\tau}_n^{\text{T}}\B{\lambda}_n^*-\B{\tau}_n^{\text{T}}\B{\lambda}_n^*+(\B{\tau}_{\infty})^{\text{T}}\B{\lambda}^*+\gamma^*-(\B{\tau}_{\infty})^{\text{T}}\B{\lambda}^*-\gamma^*\nonumber\\
&=(\B{\tau}_n-\B{\tau}_{\infty})^{\text{T}}\B{\lambda}_n^*+(\B{\tau}_{\infty})^{\text{T}}\B{\lambda}^*+\gamma^*-\B{\tau}_n^{\text{T}}\B{\lambda}_n^*-\gamma_n^*+R_\Phi^{\B{\tau}_{\infty}}\nonumber\\
&\leq (\B{\tau}_n-\B{\tau}_{\infty})^{\text{T}}\B{\lambda}_n^*+(\B{\tau}_{\infty}-\B{\tau}_n)^{\text{T}}\B{\lambda}^*+R_\Phi^{\B{\tau}_{\infty}}\label{ineq1}\\
&\leq\|\B{\tau}_n-\B{\tau}_{\infty}\|_2\|\B{\lambda}_n^*-\B{\lambda}^*\|_2+R_\Phi^{\B{\tau}_{\infty}}\nonumber
\end{align}
where \eqref{ineq1} is due to the fact that $\B{\tau}_n^{\text{T}}\B{\lambda}_n^*+\gamma_n^*\geq\B{\tau}_n^{\text{T}}\B{\lambda}^*+\gamma^*$ since $\B{\lambda}^*,\gamma^*$ is a feasible point of \eqref{learning-eq} for $\V{a}=\B{\tau}_n$. 

2. Next we show the proof for $h^{\V{a}_n,\V{b}_n}$ since that for $h^{\B{\tau}_n}$ is analogous. Let $\psi$ be the function defined for $\V{a},\V{b}\in\mathbb{R}^{m}$ as $\psi(\V{a},\V{b})=R_\Phi^{\V{a},\V{b}}$, such function is concave and has subpergradients given by solutions of \eqref{learning-ineq}. Then, if $\B{\lambda}^*$ is unique solution of \eqref{learning-eq} for $\V{a}=\B{\tau}_{\infty}$, we have that $\psi$ is differentiable at $(\B{\tau}_{\infty},\B{\tau}_{\infty})$. Then, since $\V{a}_n\underset{n\to\infty}{\to} \B{\tau}_{\infty}$ and $\V{b}_n\underset{n\to\infty}{\to} \B{\tau}_{\infty}$,  if $\B{\alpha}_n^*,\B{\beta}_n^*,\gamma_n^*$ is a solution of  \eqref{learning-ineq} for $\V{a}=\V{a}_n$ and $\V{b}=\V{b}_n$, we have that $\B{\alpha}_n^*-\B{\beta}_n^*\underset{n\to\infty}{\to}\B{\lambda}^*$ and $\gamma_n^*\underset{n\to\infty}{\to}\gamma^*$ using classical results of convergence of superdifferentials  (see e.g., Proposition~2.5 in \cite{Phe:09}). Hence, the result is obtained since $h^{\V{a}_n,\V{b}_n}$ in \eqref{t-a,b} depends continuously on $\B{\alpha}_n^*,\B{\beta}_n^*,\gamma_n^*$.

\end{document}